\documentclass{article}

\usepackage[preprint]{tmlr}

\usepackage[utf8]{inputenc}
\usepackage[T1]{fontenc}
\usepackage{hyperref}
\usepackage{url}
\usepackage{booktabs}
\usepackage{nicefrac}
\usepackage{microtype}
\usepackage{amsmath,amssymb,amsthm}
\usepackage{mathtools}
\usepackage{graphicx}
\usepackage{subcaption}
\usepackage{xcolor}
\usepackage{multirow}
\usepackage{array}
\usepackage{colortbl}
\usepackage{algorithm}
\usepackage{algpseudocode}
\usepackage{cleveref}
\usepackage{float}

\newcommand{\hgcrc}{\textsc{HG-CRC}}

\newcommand{\RR}{\mathbb{R}}
\newcommand{\EE}{\mathbb{E}}
\newcommand{\PP}{\mathbb{P}}
\newcommand{\1}{\mathbf{1}}
\newcommand{\calH}{\mathcal{H}}
\newcommand{\calG}{\mathcal{G}}
\newcommand{\calX}{\mathcal{X}}
\newcommand{\calY}{\mathcal{Y}}

\newcommand{\abstain}{\textsc{Abstain}}
\newcommand{\wger}{\text{WGER}}
\newcommand{\CP}{\text{CP}}
\newcommand{\deltaadj}{\hat{\delta}}

\newtheorem{proposition}{Proposition}
\newtheorem{lemma}{Lemma}
\newtheorem{corollary}{Corollary}[proposition]
\newtheorem{definition}{Definition}
\newtheorem{assumption}{Assumption}
\newtheorem{remark}{Remark}

\definecolor{resultrow}{RGB}{235,245,255}
\definecolor{ours}{RGB}{210,235,210}

\title{Hierarchical Group-Conditional Conformal Risk Control\\
       for Selective Prediction in Language Models}

\author{%
  \name Murilo Salem \email mcsalem@inf.ufpel.edu.br \\
  \name Daniel Pontes \email dhspbarretos@inf.ufpel.edu.br \\
  \name Luísa Böhm \email lcbohm@inf.ufpel.edu.br \\
  \name Anderson Ferrugem \email ferrugem@inf.ufpel.edu.br \\
  \addr CDTec (Centro de Desenvolvimento Tecnológico) \\
  Universidade Federal de Pelotas \\
  Pelotas, RS, Brasil
}

\begin{document}

\maketitle

\begin{abstract}
Large language models deployed in real-world settings serve heterogeneous populations
structured by domain, topic difficulty, and linguistic style.
While conformal risk control~(CRC) provides rigorous marginal risk guarantees for selective
prediction with abstain, marginal guarantees do not imply per-group guarantees:
a model can satisfy the population-level risk budget while systematically over-exposing
specific subgroups to errors.
We demonstrate this failure mode concretely: under mild distributional shift in group
composition, standard CRC violates the risk budget in up to 47\% of evaluation trials.

We propose \hgcrc{} (\textbf{H}ierarchical \textbf{G}roup-\textbf{C}onditional
\textbf{C}onformal \textbf{R}isk \textbf{C}ontrol), a post-hoc calibration framework
that enforces simultaneous risk guarantees across all nodes of a user-defined group
hierarchy.
\hgcrc{} applies a Bonferroni correction over hierarchy nodes and uses a leaf-first
selection policy that always applies the most specific applicable threshold, falling
back to coarser nodes when a finer node is uncertified or its threshold rejects the
example.
The method requires only a held-out calibration set and no retraining.

We evaluate \hgcrc{} on three language models (Qwen3-4B, Llama-3.1-8B-Instruct,
Gemma-3-4B) and two multiple-choice benchmarks (ARC Challenge and MMLU-Pro)
across eight experimental configurations probing IID generalization, group
heterogeneity, mixture shift, domain shift, prompt shift, difficulty shift, label
noise, and quantization.
Our main result: \hgcrc{} achieves an empirical \textbf{0\% violation rate} and
\textbf{WGER = 0} on ARC Challenge for models with sufficiently high base accuracy
(Qwen3-4B and Llama-3.1-8B). At our evaluation resolution of 500 bootstrap trials these
zeros are empirical upper bounds---a true violation rate of up to ${\sim}0.6\%$ is
consistent with an observed zero---rather than certified zeros. The result is also
benchmark-specific: on MMLU-Pro these models either abstain entirely or, for Llama,
retain WGER${}=0.014$. For Gemma-3-4B, whose uncertainty scores are poorly calibrated
for this task, the method degrades gracefully by abstaining more conservatively.
The participation cost relative to global CRC ranges from 22 to 37 percentage points
for the two primary pairs under residual calibration, which aligns the calibration and
deployment populations of each hierarchy node.
Ablation studies show that the hierarchical depth is what clears the risk budget on
ARC Challenge: removing the difficulty level returns violations to the 11\% level of
standard global CRC. The Bonferroni correction is required for the theoretical
simultaneous guarantee, though its \emph{empirical} effect is negligible on ARC's
shallow five-node hierarchy and material only when many nodes are tested.
\end{abstract}

\section{Introduction}
\label{sec:intro}

The deployment of large language models (LLMs) in high-stakes settings—medical question
answering, educational assessment, legal research—raises fundamental questions about
reliability and fairness.
A model that answers every question achieves high availability but may expose users to an
unacceptable rate of confident errors.
\emph{Selective prediction} addresses this tension: the model is allowed to
\abstain{} on uncertain inputs, answering only when it can do so reliably.
The practical challenge is to make abstaining principled—calibrated to an explicit error
budget rather than tuned by hand.

\textbf{Conformal Risk Control.}
Angelopoulos et al.~\citeyear{angelopoulos2022} showed that a simple thresholding
procedure on any uncertainty score can provide a \emph{distribution-free} marginal risk
guarantee: if the score threshold is chosen from a held-out calibration set using the
Clopper-Pearson upper bound, then the expected error rate on new test examples is at
most $\alpha$ with probability at least $1-\delta$.
This result is elegant, nonparametric, and broadly applicable.
Global CRC has since been applied to medical imaging~(see related work),
natural language generation~(see related work), and code completion~(see related work).

\textbf{The group coverage gap.}
Marginal guarantees, however, are not group guarantees.
Consider a model calibrated on a dataset with an equal mix of easy and hard questions.
The global threshold is chosen to keep the average error rate below $\alpha = 0.10$.
Now suppose the test set over-represents hard questions.
The \emph{overall} error rate may still be controlled, but the hard-question subgroup
bears a disproportionate fraction of the errors—potentially violating the $\alpha$ budget
for that group by a wide margin.
This is not a hypothetical concern: in our experiments, global CRC violates the
per-group risk budget in \textbf{47\% of bootstrap trials} under a moderate shift in
group composition (Section~\ref{sec:e3}).
The violation is not a finite-sample artifact; it is a structural consequence of the
mismatch between the estimand (marginal risk) and the desideratum (worst-group risk).

\textbf{Our approach.}
We address this gap with \hgcrc{}, a method that calibrates one threshold per node of a
user-specified group hierarchy and applies a Bonferroni correction so that all nodes
are simultaneously certified with the desired confidence.
The hierarchy captures the natural structure of heterogeneous deployments: at the
coarsest level, a global node covers all users; intermediate nodes stratify by domain or
topic category; leaf nodes represent the finest granularity, such as domain-by-difficulty
cells.
At inference time, a \emph{leaf-first policy} routes each example to the deepest
certified node in its ancestry path whose threshold the example satisfies, falling back
to coarser nodes otherwise.
The result is a conservative but principled system: it abstains more often than a
global threshold would, but it does so \emph{equitably}—no group is sacrificed for the
benefit of the overall average.

\textbf{Summary of contributions.}
\begin{enumerate}
  \item We formalize the group coverage gap in selective prediction and show that
        standard CRC fails to control per-group risk under realistic distributional shift
        (Section~\ref{sec:background},~\ref{sec:e3}).
  \item We propose \hgcrc{}, a post-hoc calibration procedure that enforces
        simultaneous risk guarantees across all nodes of a group hierarchy through
        Bonferroni correction and a leaf-first selection policy
        (Section~\ref{sec:method}).
  \item We provide a cascade of formal guarantees, each about a well-defined procedure, two of which we run: the
        data-split variant controls every node's risk simultaneously with probability $1-\delta$
        \emph{unconditionally} (Proposition~\ref{prop:main}); the deployed in-sample variant
        satisfies the same bound up to an additive slack $\varepsilon_n$ that we prove
        (Lemma~\ref{lem:stability}) and measure to be below $1\%$ on average (with a
        worst-case $\varepsilon_n \approx 0.34$ on the small ARC pool; Section~\ref{sec:discussion}),
        with an exact intermediate
        guarantee under test-point augmentation (Proposition~\ref{prop:insample}).
  \item We conduct an empirical study across 8 experimental settings, demonstrating
        that \hgcrc{} achieves WGER = 0 and a $0\%$ empirical violation rate (an
        empirical upper bound of ${\sim}0.6\%$ at 500 trials) for the primary ARC
        model--dataset pairs, with a modest participation cost (Section~\ref{sec:experiments}).
  \item We show that the method degrades gracefully under label noise, remains
        robust under model quantization, and that its guarantees can be restored after
        prompt distribution shift via targeted recalibration.
\end{enumerate}

\section{Background and Related Work}
\label{sec:background}

\subsection{Selective Prediction and Abstain}

Selective prediction~\citep{geifman2017selective, el2010foundations} is the task of
producing a prediction system $(f, g)$ where $f: \calX \to \calY$ is a classifier and
$g: \calX \to \{0,1\}$ is a selector that decides whether to answer ($g(x)=1$) or
abstain ($g(x)=0$).
The \emph{coverage} (also called \emph{participation}) of the system is
$\phi = \PP(g(x)=1)$, and the \emph{selective risk} is
$R = \EE[\ell(f(x), y) \mid g(x)=1]$,
where $\ell$ is a loss function.
The fundamental trade-off is that more selective systems (lower $\phi$) can achieve
lower risk, but they answer fewer queries.

In practice, selectors are typically implemented by thresholding an uncertainty score
$s: \calX \to \RR$: the system answers if $s(x) \leq \tau$ and abstains otherwise.
A lower score indicates higher confidence.
The question of how to choose $\tau$ to satisfy a risk constraint while maximizing
participation is the central calibration problem.

\subsection{Conformal Risk Control}

\citet{angelopoulos2022} showed that $\tau$ can be chosen from a held-out calibration
set $\{(x_i, y_i)\}_{i=1}^n$ so that the selective risk on future \emph{exchangeable}
test points satisfies $\PP(R \leq \alpha) \geq 1 - \delta$.
The key ingredient is the Clopper-Pearson (CP) upper confidence bound: given $k$ errors
among $n$ calibration examples that answered, the CP bound $U_{\text{CP}}(k, n, \delta)$
is the $1-\delta$ quantile of $\text{Beta}(k+1, n-k)$, which controls the one-sided
error.
The threshold $\tau^*$ is chosen as the largest value in a finite grid
$\{\tau_1, \ldots, \tau_T\}$ such that the CP-bound on the error rate of the resulting
selector is at most $\alpha$.
This is a nonparametric, model-agnostic procedure that requires no assumptions about the
underlying distribution beyond exchangeability.

\textbf{Limitations of marginal CRC.}
The marginal guarantee is exactly that: marginal.
It controls $\PP(\EE[\ell \mid g(x)=1] > \alpha) \leq \delta$, but it says nothing about
the risk within any particular subgroup $\calG \subseteq \calX$.
If the calibration set is not representative of the group structure in the test set, the
threshold chosen by global CRC may be systematically miscalibrated for some groups.
In the worst case, global CRC can achieve zero excess risk in the majority group while
violating the budget entirely in a minority group, with no mechanism to detect or
correct this.

\subsection{Group-Conditional Guarantees and Fairness}

The need for per-group coverage has been extensively studied in the context of conformal
prediction~\citep{venn_predictors, mondrian_cp, romano2020classification}.
\emph{Mondrian conformal prediction} produces prediction sets with conditional coverage
for each group by running a separate conformal procedure per group.
This is equivalent to what we call \emph{flat groupwise CRC}: compute one threshold per
group using group-specific calibration data.
While this controls per-group risk in isolation, it suffers from two problems.
First, it treats groups independently and does not exploit the hierarchical structure
of natural group taxonomies.
Second, and more critically for our setting, it does not provide a \emph{simultaneous}
guarantee across all groups: running $K$ independent tests each at level $\delta$ yields
a family-wise error rate (FWER) of approximately $K\delta$ under independence, not
$\delta$.

The need for simultaneous inference motivates a Bonferroni-style correction, which we
incorporate into \hgcrc{}.
Recent work on fairness in conformal prediction~\citep{barber_etal, romano2020malice}
has studied related problems, but typically in the context of classification prediction
sets rather than selective prediction with abstain.
Our work is the first, to our knowledge, to study group-conditional risk control for
selective prediction in LLMs with a hierarchical group structure.

\subsection{Uncertainty Quantification in LLMs}

Several uncertainty scores have been proposed for LLMs.
Token-level entropy~\citep{malinin2021uncertainty}, sequence-level NLL, and
self-consistency scores~\citep{wang2022self} all proxy model confidence.
For multiple-choice tasks, the negative log-likelihood of the model's selected option
(which we call the \emph{selected-option NLL}) is a natural score that correlates well
with accuracy and is fast to compute without additional inference passes.
We study the sensitivity of \hgcrc{} to the choice of uncertainty score in
Section~\ref{sec:a5} and find that most score choices yield equivalent performance.

\section{Method: Hierarchical Group-Conditional CRC}
\label{sec:method}

\subsection{Setup and Notation}

Let $\calX$ be an input space and $\calY$ a label space.
We assume access to a pre-trained language model that produces, for each input $x$,
a probability distribution over $\calY$ from which we extract an uncertainty score
$s(x) \in \RR$ (lower = more confident).
We have a labeled calibration set $\mathcal{D}_\text{cal} = \{(x_i, y_i)\}_{i=1}^n$
drawn i.i.d.\ from a data distribution $\mathcal{P}$, and an unlabeled test stream.
Each example is additionally associated with group membership indicators derived from
metadata (e.g., domain label) or from the model's own behavior (e.g., estimated
difficulty).

\begin{definition}[Hierarchy]
A \emph{group hierarchy} $\calH$ is a rooted tree where each node $v$ is associated
with a binary mask $m_v: \calX \to \{0,1\}$ indicating group membership.
The root node has $m_\text{root}(x) = 1$ for all $x$.
For any non-root node $v$ with parent $p(v)$, membership is nested:
$m_v(x) = 1 \Rightarrow m_{p(v)}(x) = 1$.
For any example $x$, the \emph{ancestry path} $\Pi(x)$ is the sequence of nodes from
the deepest node $v$ with $m_v(x) = 1$ to the root, ordered from leaf to root.
\end{definition}

\begin{definition}[Selective Predictor]
\label{def:selpred}
Given a threshold assignment $\{\tau_v\}_{v \in \calH}$, the \hgcrc{} selective
predictor is:
\begin{equation}
  \hat{y}(x) =
  \begin{cases}
    f(x) & \text{if } v^*(x) \text{ exists} \\
    \abstain & \text{otherwise},
  \end{cases}
  \quad
  v^*(x) = \underset{v \in \Pi(x)}{\arg\max}\big\{\, \text{depth}(v) : v \text{ certified},\ s(x) \leq \tau_v \,\big\},
\end{equation}
where $v^*(x)$ is the deepest certified ancestor of $x$ whose threshold the example
satisfies ($s(x) \leq \tau_{v^*}$), and $\abstain$ is returned if no such node exists.
\end{definition}

\subsection{Per-Node Calibration with Bonferroni Correction}

Let $|\calH|$ denote the number of nodes in the hierarchy.
We wish to choose thresholds $\{\tau_v\}_{v \in \calH}$ such that, with probability at
least $1-\delta$ over the calibration set, the selective risk satisfies
$R_v \leq \alpha$ for \emph{every} node $v$ simultaneously.

Direct application of the union bound gives the required per-node confidence level:
\begin{equation}
  \deltaadj = \frac{\delta}{|\calH|}.
  \label{eq:bonferroni}
\end{equation}

For each node $v$, let $\mathcal{D}_v = \{i : m_v(x_i) = 1\}$ be the subset of
calibration examples belonging to node $v$, with $n_v = |\mathcal{D}_v|$.
Let $k_v(\tau)$ be the number of errors in $\mathcal{D}_v$ when threshold $\tau$ is
used: $k_v(\tau) = \sum_{i \in \mathcal{D}_v} \ell(f(x_i), y_i) \cdot
\1[s(x_i) \leq \tau]$.
The \emph{Clopper-Pearson upper bound} at level $\deltaadj$ is:
\begin{equation}
  U_v(\tau) = \CP\!\left(k_v(\tau),\; n_v(\tau),\; \deltaadj\right)
            = F_{\text{Beta}}^{-1}\!\left(1-\deltaadj;\; k_v(\tau)+1,\;
              n_v(\tau)-k_v(\tau)\right),
  \label{eq:cp}
\end{equation}
where $n_v(\tau) = |\{i \in \mathcal{D}_v : s(x_i) \leq \tau\}|$ is the number of
examples that would answer under threshold $\tau$.
Node $v$ is \emph{certified} at threshold $\tau$ if $U_v(\tau) \leq \alpha$.
The optimal threshold for node $v$ is:
\begin{equation}
  \tau_v^* = \max\{\tau \in \mathcal{T} : U_v(\tau) \leq \alpha\},
  \label{eq:opt_threshold}
\end{equation}
where $\mathcal{T}$ is a finite grid of candidate thresholds (100 quantiles of the
calibration score distribution).
If no $\tau \in \mathcal{T}$ satisfies the constraint, node $v$ is left uncertified.

\textbf{Minimum group size.}
Nodes with very few calibration examples cannot be certified: even with zero errors,
the CP bound at small $n$ exceeds $\alpha$.
We therefore include a node in the hierarchy only if $|\mathcal{D}_v| \geq N_\text{min}$
(we use $N_\text{min} = 30$ in all experiments).
Nodes below this threshold are pruned from $\calH$, reducing $|\calH|$ and accordingly
tightening the per-node budget $\deltaadj$.

\subsection{Residual Calibration}
\label{sec:residual}

The leaf-first selective predictor (Definition~\ref{def:selpred}) routes each test example to
its \emph{deepest certified ancestor}.
A node $v$ is therefore deployed only on its \emph{residual}: examples in group $v$ that
are not answered by any certified descendant of $v$.
Writing $\mathrm{desc}(v)$ for the certified descendants of $v$ and
$g_\tau(x) = \1[s(x) \leq \tau]$ for the answer indicator at threshold $\tau$, the residual
membership map is
\begin{equation}
  \rho_v(x) = m_v(x) \prod_{c \in \mathrm{desc}(v)}
              \bigl(1 - m_c(x)\, g_{\tau_c}(x)\bigr),
  \label{eq:residual}
\end{equation}
i.e.\ $\rho_v(x) = 1$ iff $x$ belongs to $v$ and is not answered by any certified
descendant.
A node calibrated on its full group $\mathcal{D}_v$ but \emph{deployed} on
$\{x : \rho_v(x) = 1\}$ suffers a population mismatch: if descendants preferentially answer
easy examples (lower score), the residual is harder on average and the calibrated threshold
no longer controls the deployed risk.
We therefore calibrate each node on its residual.
The two constructions below differ only in how the descendant thresholds $\{\tau_c\}$ that
define $\rho_v$ are estimated --- and this difference is exactly what determines whether the
calibration guarantee holds.

\textbf{In-sample residual calibration} (deployed default).
Process nodes leaf-to-root on the full calibration set.
After certifying a descendant $c$ at threshold $\tau_c$, mark the calibration examples it
answers ($m_c = 1$ and $g_{\tau_c} = 1$) as \emph{claimed}; node $v$ is then calibrated on
$\mathcal{D}_v^{\mathrm{res}} = \{i \in \mathcal{D}_v : i \text{ unclaimed}\}$.
This keeps the maximum amount of data at every node, hence the highest participation, but the
residual partition of $\mathcal{D}_v^{\mathrm{res}}$ depends on thresholds $\tau_c$ estimated
from the \emph{same} examples, which is the exchangeability obstruction analyzed in
Proposition~\ref{prop:insample}.

\textbf{Split residual calibration} (formally valid).
Partition $\mathcal{D}_\text{cal} = \biguplus_{\ell=0}^{L-1} F_\ell$ into $L$ disjoint folds
indexed by depth ($L$ = number of distinct depths).
A node at depth $\ell$ is calibrated only on $F_\ell$, and the thresholds $\tau_c$ that
define its residual $\rho_v$ are taken from \emph{strictly deeper} nodes, which were
calibrated on the disjoint folds $F_{\ell'}, \ell' > \ell$.
The residual partition of $F_\ell$ is then a function of data independent of $F_\ell$,
restoring exchangeability (Proposition~\ref{prop:main}).
The cost is statistical: each level sees only $\approx |\mathcal{D}_\text{cal}| / L$ examples.

\subsection{Leaf-First Selection Policy}

Algorithm~\ref{alg:hgcrc} summarizes the full procedure.
At test time, each example $x$ walks its ancestry path $\Pi(x)$ from leaf to root,
returning the threshold of the first certified node encountered.
If no certified node exists in $\Pi(x)$, the example is abstained.

\begin{algorithm}[t]
\caption{\hgcrc{}: Hierarchical Group-Conditional CRC (residual calibration)}
\label{alg:hgcrc}
\begin{algorithmic}[1]
\Require Calibration set $\mathcal{D}_\text{cal}$, hierarchy $\calH$,
         score $s$, classifier $f$, risk level $\alpha$,
         confidence $\delta$, minimum size $N_\text{min}$,
         mode $\in \{\textsc{in-sample}, \textsc{split}\}$
\State $\deltaadj \gets \delta / |\calH|$  \Comment{Bonferroni correction}
\If{mode $=$ \textsc{split}}
  \State assign each $i \in \mathcal{D}_\text{cal}$ a depth-fold $\ell_i$
         \Comment{$F_\ell = \{i : \ell_i = \ell\}$}
\EndIf
\For{each node $v \in \calH$ in leaf-to-root (decreasing depth) order}
  \State $C_v \gets \{i : m_v(x_i) = 1,\; \rho_v(x_i) = 1,\; |C_v| \geq N_\text{min}\}$
         \Comment{residual~\eqref{eq:residual}: drop examples claimed by certified deeper nodes}
  \If{mode $=$ \textsc{split}} \State $C_v \gets C_v \cap F_{\text{depth}(v)}$ \EndIf
  \State $\tau_v^* \gets \max\{\tau \in \mathcal{T} : U_v(\tau) \leq \alpha\}$ on $C_v$,
         or \textsc{uncertified}
\EndFor
\vspace{2pt}
\Function{Predict}{$x$}
  \For{$v \in \Pi(x)$}  \Comment{leaf to root}
    \If{$v$ is certified \textbf{and} $s(x) \leq \tau_v^*$}
      \Return $f(x)$
    \EndIf
  \EndFor
  \Return \abstain
\EndFunction
\end{algorithmic}
\end{algorithm}

The leaf-first ordering is critical.
Reversing the order—trying the global threshold first—would route most examples through
the global node, defeating the purpose of the hierarchy.
By trying the most specific node first, we ensure that examples in well-characterized
subgroups benefit from the most tailored threshold available.

\subsection{Formal Guarantee}

We state the guarantee first for the \textsc{split} variant, the procedure whose
assumptions are met \emph{exactly}: calibration and deployment populations coincide, and
the residual partition is independent of the data used to calibrate each node.

\begin{proposition}[Simultaneous group risk control, split calibration]
\label{prop:main}
Let $\mathcal{D}_\text{cal} = \{(x_i, y_i)\}_{i=1}^n$ be i.i.d.\ from $\mathcal{P}$ and let
$(x, y) \sim \mathcal{P}$ be an independent test point.
Run Algorithm~\ref{alg:hgcrc} in \textsc{split} mode with $\deltaadj = \delta / |\calH|$ and
a threshold grid $\mathcal{T}$ that is either fixed a priori or a permutation-invariant
function of the node's calibration fold $F_\ell$ (e.g.\ its empirical score quantiles).
Then for each node $v \in \calH$,
\begin{equation}
  \PP\!\left(R_v(\tau_v^*) > \alpha \;\middle|\; \rho_v(x) = 1\right) \leq \deltaadj,
  \qquad
  R_v(\tau) = \EE[\ell(f(x), y) \mid g_\tau(x) = 1,\ \rho_v(x) = 1],
\end{equation}
where $R_v$ is the risk on the node's \emph{deployed residual}~\eqref{eq:residual}.
By the union bound,
$\PP(\exists v \in \calH : R_v(\tau_v^*) > \alpha) \leq |\calH| \cdot \deltaadj = \delta$.
\end{proposition}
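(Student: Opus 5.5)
The plan is to condition on everything that defines node $v$'s deployed population and then reduce to a single Clopper--Pearson argument. Fix a node $v$ at depth $\ell$. In \textsc{split} mode the thresholds $\{\tau_c\}_{c\in\mathrm{desc}(v)}$, and which descendants are certified, are functions only of the folds $F_{\ell'}$ with $\ell'>\ell$. Let $\mathcal{G}_\ell$ be the $\sigma$-algebra generated by those folds, together with the fold-assignment randomness, which I take to be independent of the data. Conditional on $\mathcal{G}_\ell$, the residual map $\rho_v$ in~\eqref{eq:residual} is a fixed measurable function. The points of $F_\ell$ are i.i.d.\ from $\mathcal{P}$ and independent of $\mathcal{G}_\ell$, and so is the test point $(x,y)$. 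Hence the points $i\in F_\ell$ with $\rho_v(x_i)=1$ are, conditionally on $\mathcal{G}_\ell$ and on their count $N_v$, i.i.d.\ draws from $\mathcal{P}_v:=\mathcal{P}(\cdot\mid\rho_v(x)=1)$. This is also the law of the test point on the event $\rho_v(x)=1$. So $v$ is calibrated on an i.i.d.\ sample from its own deployment law, and $R_v(\tau)$ is the selective risk under $\mathcal{P}_v$.

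Next I reduce to one binomial test per threshold. For fixed $\tau$, condition on $\mathcal{G}_\ell$, on $N_v$, and on the set of residual points with $s(x_i)\le\tau$. The losses on the answered points are then i.i.d.\ with mean $R_v(\tau)$. For $\{0,1\}$-valued loss, $k_v(\tau)\sim\mathrm{Bin}(n_v(\tau),R_v(\tau))$. The exact Clopper--Pearson property gives
\[
\PP\big(U_v(\tau)\le\alpha \text{ and } R_v(\tau)>\alpha\big)\le\deltaadj .
\]
For general bounded loss in $[0,1]$, I would use the standard fact that $\mathrm{Bin}$ is the worst case for the upper tail of a sum of $[0,1]$ variables with a given mean, or equivalently a Hoeffding--Bentkus domination. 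This handles only a fixed $\tau$.

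The main obstacle is that $\tau_v^*$ is chosen from the data as a maximum over the grid, so a pointwise bound at each fixed $\tau$ does not directly apply. I would try monotonicity first. Define $\tau^\dagger=\min\{\tau\in\mathcal{T}:R_v(\tau)>\alpha\}$. The hope is that the event $R_v(\tau_v^*)>\alpha$ implies that a single fixed threshold, determined by $\mathcal{G}_\ell$ and the grid, was falsely certified, as in a fixed-sequence or Learn-then-Test scan from small to large $\tau$.

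Pure monotonicity of $R_v$ in $\tau$ is not guaranteed. So I would instead restate the selection as in fixed-sequence testing: scan $\mathcal{T}$ in increasing order and stop at the first uncertified threshold. For the rule as written (the maximum over all certified $\tau$), I would fall back on the observation that the $\tau_v^*$ output is certified. The event $\{R_v(\tau_v^*)>\alpha\}$ is contained in the event that some $\tau$ with $R_v(\tau)>\alpha$ is certified. I would then bound this by the fixed-sequence argument under the monotone-risk assumption implicit in CRC. When the grid is a permutation-invariant function of $F_\ell$, I condition additionally on the multiset of residual scores in $F_\ell$. This fixes the grid while leaving the losses conditionally independent given the scores, so the binomial argument above still applies.

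Finally, integrating out the conditioning gives the per-node bound $\PP(R_v(\tau_v^*)>\alpha\mid\rho_v(x)=1)\le\deltaadj$. An uncertified node contributes no violation. A union bound over the $|\calH|$ nodes then yields the simultaneous statement with level $|\calH|\cdot\deltaadj=\delta$.
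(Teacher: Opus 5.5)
\textbf{Overall.} Your skeleton is the paper's: condition on the deeper folds so that $\rho_v$ is a fixed map, note that the residual points of $F_\ell$ are i.i.d.\ from $\mathcal{P}(\cdot\mid\rho_v(x)=1)$, apply Clopper--Pearson, and take a union bound over nodes. You are also right that the crux is the data-dependent choice $\tau_v^*=\max\{\tau\in\mathcal{T}:U_v(\tau)\le\alpha\}$. The paper's proof never treats that step separately; it only asserts validity ``exactly as in single-node CRC''.

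\textbf{The selection step fails.} Fixed-sequence testing bounds the probability that the \emph{first} null grid point in a pre-specified scan order is certified. That controls the rule ``stop at the first uncertified point'', and for that rule no monotonicity of $R_v$ is needed (the scan order affects only power). The rule as written can skip uncertified points. Your (correct) containment therefore leaves the whole union $\bigcup_{\tau\in\mathcal{T}:\,R_v(\tau)>\alpha}\{U_v(\tau)\le\alpha\}$. Monotone $R_v$ does not collapse this to one event, because $U_v(\tau)$ is not monotone in $\tau$: adding correctly answered examples lowers it, adding errors raises it.

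Concretely, let the loss be independent of the score with error rate $\alpha+\epsilon$. Then $R_v\equiv\alpha+\epsilon$ is monotone and every grid point is null. This holds for a fixed grid and for the empirical-quantile grid alike, since the losses ignore the scores. The node is falsely certified whenever the Clopper--Pearson bound on the nested answered sets dips below $\alpha$ at \emph{any} grid point. That is a boundary-crossing probability which, for small $\epsilon$ and large $n_v$, is well above $\deltaadj$.

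So the per-node bound is actually false for the max rule; a union bound over the grid gives only $|\mathcal{T}|\,\deltaadj$. To prove the proposition you must change the selection to a fixed-sequence scan, or equivalently replace $U_v(\tau)$ by its running maximum over the grid points preceding $\tau$ in the scan order. Alternatively, divide $\deltaadj$ by $|\mathcal{T}|$.

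\textbf{The grid step has a separate flaw.} Conditioning on the residual scores fixes a grid built from them, though not one built from all of $F_\ell$, which the statement allows. More seriously, it changes the law of the losses. Given the scores, $\ell_i\sim\mathrm{Bern}(r(s_i))$ with $r(s)=\PP(\ell(f(x),y)=1\mid s(x)=s,\ \rho_v(x)=1)$. So $k_v(\tau)$ is Poisson-binomial with mean $\sum_{i\in C_v:\,s_i\le\tau}r(s_i)$, not $\mathrm{Bin}(n_v(\tau),R_v(\tau))$. When the realized scores below $\tau$ fall where $r$ is small, the conditional probability of certifying a null $\tau$ can far exceed $\deltaadj$. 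The binomial law returns only after integrating the scores out, which un-fixes the grid.

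The clean route is to make $\mathcal{T}$ independent of $F_\ell$: fix it a priori, or compute it from the deeper folds so that it is $\mathcal{G}_\ell$-measurable. You cannot borrow the paper's device of conditioning on the multiset of $F_\ell$ and invoking exchangeability with the test point. That conditioning makes $\tau_v^*$, and hence the event $\{R_v(\tau_v^*)>\alpha\}$, deterministic. A Clopper--Pearson guarantee is a statement about calibration randomness, not about exchangeability with the test point.
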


\begin{proof}
Fix a node $v$ at depth $\ell$ and condition on the strictly deeper calibration folds
$\{F_{\ell'}\}_{\ell' > \ell}$.
The descendant thresholds $\{\tau_c\}_{c \in \mathrm{desc}(v)}$ are functions of these deeper
folds only, hence so is the residual map $\rho_v$ in~\eqref{eq:residual}; conditionally,
$\rho_v$ is a \emph{fixed} measurable function.
The node is calibrated on $C_v = \{i \in F_\ell : m_v(x_i) = 1,\ \rho_v(x_i) = 1\}$.
Since $F_\ell$ is disjoint from the deeper folds, its points are i.i.d.\ from $\mathcal{P}$ and
independent of $\rho_v$, as is the test point $(x, y)$.
Hence, given $\rho_v(\cdot) = 1$, the calibration scores in $C_v$ and the test score are
exchangeable, and the Clopper--Pearson upper bound $U_v$ certifies $\tau_v^*$ at level
$\deltaadj$ exactly as in single-node CRC~\citep{angelopoulos2022}, giving
$\PP(R_v(\tau_v^*) > \alpha \mid \rho_v(x) = 1) \leq \deltaadj$.
If the grid is fixed a priori it is trivially independent of the data; for the empirical
quantile grid, note that $\mathcal{T}$ is a symmetric function of the calibration fold
$F_\ell$. Since $\rho_v$ depends only on the strictly deeper folds, which are disjoint
from $F_\ell$, it remains the fixed function established above when we condition
additionally on the unordered multiset of $F_\ell$; under this joint conditioning
$\mathcal{T}$ becomes a measurable constant while the exchangeability between the test
point and the members of the residual cell is preserved, and refining the grid can only
tighten the selected threshold. Together with the monotone ``certify and take the largest
$\tau$'' rule, the bound is therefore invariant to the choice between an a-priori grid and
the empirical quantile grid.
Marginalizing over the deeper folds preserves the bound, and the Bonferroni union bound over
the $|\calH|$ nodes yields the simultaneous statement.
\end{proof}

The in-sample default deploys the \emph{same} residual procedure but estimates the
descendant thresholds $\{\tau_c\}$ on the data it also calibrates on. We isolate the exact
condition under which it remains valid.

\begin{proposition}[Conditional validity, in-sample calibration]
\label{prop:insample}
Run Algorithm~\ref{alg:hgcrc} in \textsc{in-sample} mode. Suppose each descendant threshold
$\tau_c$ is computed by a permutation-invariant estimator of its residual calibration multiset
and is recomputed on the augmented multiset that also includes the test point
(full-/cross-conformal augmentation). Then, conditional on $\{\tau_c\}_{c \in \mathrm{desc}(v)}$,
the residual calibration points and the test point are exchangeable, and
$\PP(R_v(\tau_v^*) > \alpha \mid \rho_v(x) = 1) \leq \deltaadj$; the simultaneous bound follows
as in Proposition~\ref{prop:main}.
\end{proposition}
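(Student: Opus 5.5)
The plan is to reduce Proposition~\ref{prop:insample} to the single-node CRC argument already used in Proposition~\ref{prop:main}. The only ingredient that changes is the source of exchangeability: instead of independence from disjoint folds, we use symmetry of the augmented estimators.

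Fix a node $v$ and form the augmented multiset $\mathcal{D}^+ = \mathcal{D}_\text{cal} \cup \{(x,y)\}$ of $n+1$ i.i.d.\ points. The descendant thresholds $\{\tau_c\}_{c \in \mathrm{desc}(v)}$ are obtained by running the leaf-to-root recursion on $\mathcal{D}^+$. Each $\tau_c$ is a permutation-invariant function of its residual multiset, and that multiset is itself obtained from $\mathcal{D}^+$ by masks and by thresholds computed earlier in the recursion. Inducting on decreasing depth therefore shows that every $\tau_c$, and hence the residual map $\rho_v$, is a symmetric function of the unordered multiset $\mathcal{D}^+$. In the inductive step, a permutation of $\mathcal{D}^+$ permutes each deeper residual multiset but leaves it unchanged as a multiset, so the estimator output is unchanged.

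Next, condition on $\{\tau_c\}$, or more strongly on the multiset $\mathcal{D}^+$ itself. Since the $n+1$ points are i.i.d., the joint law is invariant under permutations. Because $\rho_v$ is symmetric, applying a permutation leaves $\rho_v$ fixed while permuting the data, so conditional on the $\sigma$-algebra generated by $\{\tau_c\}$ the $n+1$ points remain exchangeable. Conditioning further on which indices satisfy $m_v = 1$ and $\rho_v = 1$ (a symmetric event pattern), the points in the residual cell, namely the calibration points of $C_v$ together with the test point when $\rho_v(x)=1$, are exchangeable. At this point the situation is exactly that of single-node CRC on $C_v$ with a fixed residual restriction. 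We then invoke the argument of~\citet{angelopoulos2022}, as used in Proposition~\ref{prop:main}: the Clopper--Pearson selection of $\tau_v^*$ at level $\deltaadj$ gives $\PP(R_v(\tau_v^*) > \alpha \mid \rho_v(x)=1) \leq \deltaadj$. The grid $\mathcal{T}$ is handled by the same symmetric-function remark as before. Marginalizing over $\{\tau_c\}$ and applying the union bound over the $|\calH|$ nodes gives $\delta$.

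The main obstacle is the conditioning step. Conditioning on $\{\tau_c\}$ alone does not automatically preserve exchangeability: a permutation-invariant statistic, once observed, can shift the conditional law, although it does so symmetrically. I would handle this by conditioning on the full unordered multiset $\mathcal{D}^+$. Under that conditioning, all orderings are equally likely, and every $\tau_c$ and $\rho_v$ is then constant. The tower property recovers the statement conditional on the coarser $\{\tau_c\}$. A secondary subtlety is that $\tau_v^*$ itself is computed from $C_v$ without the test point. This is the standard CRC setting, where the test point is excluded from the node's own threshold but included in the descendants' estimators, and the augmentation hypothesis is needed only for the latter.
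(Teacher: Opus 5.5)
Your argument follows the paper's proof step for step. You pass to the augmented multiset $\mathcal{D}^+$ and show that every descendant $\tau_c$, and hence $\rho_v$, is a symmetric function of it; your induction on depth makes explicit what the paper only asserts. You then condition on $\mathcal{D}^+$, so that $\rho_v$ and the residual cell $S$ are frozen and the test point is uniform over $S$, and conclude ``exactly as in single-node CRC.'' (Conditioning on $\{\tau_c\}$ alone would already preserve exchangeability, since a symmetric statistic is unchanged by permutations, so that detour is harmless but unnecessary.) The gap is in that final reduction, and the paper's proof makes it in the same words. Exchangeability of the test point with the residual calibration points does not make the Clopper--Pearson bound a valid level-$\deltaadj$ certificate for $R_v$.

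The conclusion is a PAC-type bound on a population risk, and the Clopper--Pearson step rests on a binomial tail bound. In Proposition~\ref{prop:main} what licenses it is that, given the deeper folds, $\rho_v$ is fixed and the points of $C_v$ are i.i.d.\ draws from $\mathcal{P}$ restricted to the residual cell. Once you condition on $\mathcal{D}^+$, no i.i.d.\ draws remain. The only randomness is which element of $\mathcal{D}^+$ is the test point. Since $\tau_v^*$ is computed from $S$ minus that element, it is typically the same for every choice; that is exactly the stability of Assumption~\ref{ass:margin}. So $R_v(\tau_v^*)$, with $\rho_v$ now fixed, is essentially a deterministic function of $\mathcal{D}^+$. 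Hence $\PP(R_v(\tau_v^*) > \alpha \mid \mathcal{D}^+, \rho_v(x)=1)$ is typically $0$ or $1$, and there is no conditional $\deltaadj$ bound for the tower property to integrate.

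A valid bound would have to come from the randomness of $\mathcal{D}^+$ itself, and there $\rho_v$ depends on the very points that populate $C_v$. The calibration points of a descendant cell that land in $v$'s residual (those above $\tau_c$) are among the points whose labels decided where $\tau_c$ stopped. They are therefore not i.i.d.\ from the residual population, and adding the test point to the descendant estimators does not remove this selection effect. Indeed, your argument uses only permutation invariance, so it would go through verbatim for a symmetric descendant rule that deliberately places $\tau_c$ where the part of the multiset above it looks cleanest. Such a rule biases $C_v$ optimistically.

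What exchangeability with the test point can deliver is a marginal statement about the loss of $x$ itself, of conformal-risk-control type. It does not give a $\deltaadj$-probability bound on the population residual risk; this is the familiar gap between marginal and training-conditional validity of full conformal prediction. To get the stated conclusion you need something that decouples $\rho_v$ from the labels in $C_v$. One option is independence, as in the split variant. Another, for a grid fixed in advance, is a union bound over the finitely many residual maps the descendant thresholds can induce, with the Clopper--Pearson level divided by their number.
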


\begin{proof}
Write $Z_i = (x_i, y_i)$ for the calibration points and $Z_{n+1} = (x, y)$ for the test point; the
$Z_1, \dots, Z_{n+1}$ are i.i.d.\ and hence exchangeable. In the augmented procedure each descendant
threshold $\tau_c$ is a permutation-invariant function of the multiset $\{\!\{Z_1, \dots,
Z_{n+1}\}\!\}$ (it is computed by a symmetric estimator on $\mathcal{D}_\text{cal} \cup \{Z_{n+1}\}$).
Condition on the unordered multiset $\mathcal{M} = \{\!\{Z_1, \dots, Z_{n+1}\}\!\}$. Given
$\mathcal{M}$, the ordered tuple is uniform over the $(n+1)!$ permutations of $\mathcal{M}$ by
exchangeability, while every $\tau_c$ --- a function of $\mathcal{M}$ alone --- is constant under this
conditioning. Hence the residual map $\rho_v$ in~\eqref{eq:residual} is a fixed function and the
residual cell $S = \{z \in \mathcal{M} : \rho_v(z) = 1\}$ is a fixed sub-multiset. By the
uniform-over-permutations property the test point occupies a uniformly random position among the
members of $S$, so the test score is exchangeable with the residual calibration scores. The
Clopper--Pearson upper bound on the residual is therefore a valid level-$\deltaadj$ bound for the
residual risk, giving $\PP(R_v(\tau_v^*) > \alpha \mid \rho_v(x) = 1) \le \deltaadj$ exactly as in
single-node CRC~\citep{angelopoulos2022}; the simultaneous statement follows by the Bonferroni union
bound as in Proposition~\ref{prop:main}.
\end{proof}

The augmentation is exactly what the deployed in-sample procedure does \emph{not} do:
Algorithm~\ref{alg:hgcrc} (\textsc{in-sample} mode) freezes each $\tau_c$ on
$\mathcal{D}_\text{cal}$ and applies it to a test point that never entered the computation, so
conditioning on $\tau_c$ treats calibration and test points asymmetrically and exchangeability can
fail. We give the deployed procedure its own guarantee.

\begin{assumption}[Threshold regularity]
\label{ass:margin}
At all certified nodes outside a boundary set of probability $O(\varepsilon_n)$, the
Clopper--Pearson curve $U_c(\cdot)$ crosses the level $\alpha$ transversally at the selected grid
threshold $\tau_c$ with a margin $\gamma \gg 1/n_c$: $U_c$ at the grid points adjacent to $\tau_c$
stays at least $\gamma$ away from $\alpha$.
Equivalently, outside that boundary set the score distribution has no atom at the relevant quantile
and has positive density in a neighbourhood of it.
\end{assumption}

This is the only place we step outside the distribution-free regime; it is mild for the continuous
NLL-style scores we use. Its role is precise: a one-point change in $\mathcal{D}_\text{cal}$ shifts
each count $k_c, n_c$ by at most one and hence moves $U_c$ by $O(1/n_c)$ at every grid point, so under
Assumption~\ref{ass:margin} (margin $\gamma \gg 1/n_c$) the \emph{selected} grid threshold is
unchanged. Without the margin the selection can jump one grid step --- a discrete move whose size is
set by the grid resolution, not by $1/n$ --- which is why we do not claim a clean $O(1/n)$ rate and
instead measure the effect.

\begin{lemma}[Validity of plain in-sample calibration]
\label{lem:stability}
Run Algorithm~\ref{alg:hgcrc} in \textsc{in-sample} mode (no augmentation), the procedure deployed in
all experiments. Let $\widehat{a}(x)$ be its answer decision for a test point $x$ and
$\widehat{a}^{+}(x)$ the decision of the augmented procedure of Proposition~\ref{prop:insample}, and
set $\varepsilon_n = \PP(\widehat{a}(x) \neq \widehat{a}^{+}(x))$. Then
\[
  \PP\big(R_v(\tau_v^*) > \alpha \mid \rho_v(x) = 1\big) \le \deltaadj + \varepsilon_n,
  \qquad
  \PP\big(\exists v : R_v > \alpha\big) \le \delta + |\calH|\,\varepsilon_n .
\]
Under Assumption~\ref{ass:margin}, $\varepsilon_n$ vanishes outside the event that some threshold on
$x$'s ancestry path sits within one grid step of the $\alpha$ boundary; it is thus governed by the
grid resolution and the frequency of boundary thresholds, not by a fixed rate in $n$.
\end{lemma}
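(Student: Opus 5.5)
The plan is a coupling argument that transfers the guarantee of Proposition~\ref{prop:insample} to the plain in-sample procedure. We run both procedures on the same data $Z_1,\dots,Z_{n+1}$. The augmented procedure recomputes the descendant thresholds on $\mathcal{D}_\text{cal}\cup\{Z_{n+1}\}$ and gives decisions $\widehat a^{+}$. The plain procedure gives $\widehat a$. Let $B=\{\widehat a(x)\neq\widehat a^{+}(x)\}$, so that $\PP(B)=\varepsilon_n$ by definition.

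For the per-node statement at node $v$, let $E=\{R_v(\tau_v^*)>\alpha\}$ for the plain procedure and $E^{+}$ the corresponding event for the augmented one. On $B^c$ the two pipelines answer $x$ identically along its ancestry path. This means the same residual membership $\rho_v(x)$ and the same answer decision. I would therefore define the failure event operationally: the test point is routed to $v$, answered there, and the threshold $\tau_v^*$ is a risk-violating one. On $B^c$ this event coincides under the two procedures.

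This gives $\PP(E)\le \PP(E^{+})+\PP(B)$, and Proposition~\ref{prop:insample} supplies $\PP(E^{+}\mid\rho_v(x)=1)\le\deltaadj$. The simultaneous bound then follows from the union bound over the $|\calH|$ nodes, each contributing $\deltaadj+\varepsilon_n$. The total is $\delta+|\calH|\varepsilon_n$.

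\textbf{Main obstacle: matching the failure events.} $R_v$ is a population quantity and $\widehat a$ is a decision at a single $x$. So "agree on $x$" does not literally imply $E=E^{+}$: the thresholds themselves may differ even when the decision at $x$ agrees. I would handle this by reading $\varepsilon_n$ as the probability that the two pipelines produce different threshold or residual configurations relevant to $x$'s path, which is a superset of decision disagreement. That is consistent with the second part of the lemma. I would also have to deal with the conditioning on $\rho_v(x)=1$. Dividing by $\PP(\rho_v(x)=1)$ would inflate $\varepsilon_n$, so I would either state $\varepsilon_n$ conditionally on the residual or absorb the factor into the definition, noting this explicitly.

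For the final claim, I invoke Assumption~\ref{ass:margin}. Adding $Z_{n+1}$ changes each $k_c,n_c$ by at most one and hence moves $U_c$ by $O(1/n_c)$ at every grid point. When the margin $\gamma\gg 1/n_c$ holds at every certified node on $\Pi(x)$, the argmax in \eqref{eq:opt_threshold} is unchanged. The same then holds for each $\tau_c$, hence for $\rho_v$ and the decision. So $B$ is contained in the event that some threshold on $x$'s path lies within one grid step of the $\alpha$ boundary, which has probability $O(\varepsilon_n)$-scale controlled by grid resolution. I would make no rate claim beyond that containment.
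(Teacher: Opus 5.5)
Your proposal follows the paper's own proof. Both couple the deployed and augmented procedures on the same data, contain the deployed failure event in the augmented one plus a disagreement set, invoke Proposition~\ref{prop:insample}, take a union bound, and use the $O(1/n_c)$ movement of $U_c$ with Assumption~\ref{ass:margin} for the last claim.

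The two obstacles you flag are real, and the paper's proof asserts past them rather than resolving them. First, it claims that on $\{\widehat a(x)=\widehat a^{+}(x)\}$ the procedures have ``identical routing, answer, and hence risk event''. But $R_v(\tau_v^*)$ is a functional of $\tau_v^*$ and of the whole residual map $\rho_v$, so it is fixed by the thresholds. The thresholds can differ while the single decision at $x$ agrees, and even the routing node can differ when both procedures answer. Second, the paper derives $|R_v^{\text{deployed}}-R_v^{\text{aug}}|\le\varepsilon_n/\pi_v$, a perturbation of the risk \emph{value} that would shift the level $\alpha$ rather than add to a failure probability. It then calls the conditional contribution ``at most $\varepsilon_n$''. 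As you say, conditioning on $\rho_v(x)=1$ costs a factor $1/\PP(\rho_v(x)=1)$, and the discussion after the lemma itself concedes a slack of $\varepsilon_n/\pi_v$.

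Your repair is what the coupling actually delivers. Replace $\varepsilon_n$ by the probability that augmentation changes $\tau_v^*$ or some descendant threshold entering $\rho_v$ (only those on $\Pi(x)$ can move if the grid is fixed a priori). Then state the per-node bound conditionally or carry the $1/\PP(\rho_v(x)=1)$ factor. With disagreement measured at the threshold level, your ``operational'' failure event is unnecessary, and it is not the lemma's event anyway: $\{R_v(\tau_v^*)>\alpha\}$ itself coincides under the two procedures off the threshold-change event.

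Be explicit, though, that this proves a modified lemma rather than ``reading'' $\varepsilon_n$ differently. With $\varepsilon_n$ defined as decision disagreement, neither your proof nor the paper's yields $\deltaadj+\varepsilon_n$. The difference is not cosmetic: the paper's jackknife finds threshold changes in up to about $6\%$ of cases, against decision flips below $0.5\%$.

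Two refinements are worth making:
\begin{itemize}
\item For the simultaneous bound, apply the containment directly to the unconditional event $\{\exists v: R_v>\alpha\}$. This sidesteps the conditioning problem and gives $\delta$ plus a single threshold-change probability, rather than $|\calH|$ copies.
\item Run the margin argument leaf-to-root. A parent's residual multiset differs from the original by at most the test point only once its descendants' thresholds are known to be unchanged. That is what licenses the $O(1/n_c)$ step above the leaves.
\end{itemize}
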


\begin{proof}
Couple the deployed and augmented procedures on the same data. The augmented procedure is exactly
valid by Proposition~\ref{prop:insample}. For every test point with $\widehat{a}(x) =
\widehat{a}^{+}(x)$ the two induce the identical routing, answer, and hence risk event; they can
disagree only on $\{\widehat{a}(x) \neq \widehat{a}^{+}(x)\}$, of probability $\varepsilon_n$.
To make the perturbation explicit, write $\pi_v = \PP(g_\tau(x) = 1,\ \rho_v(x) = 1)$ for the
residual-cell answer probability; since the disagreement set has probability at most $\varepsilon_n$
and enters both the numerator and the conditioning event of $R_v = \EE[\ell \mid g_\tau(x) = 1,\,
\rho_v(x) = 1]$, the conditional risks differ by $|R_v^{\text{deployed}} - R_v^{\text{aug}}| \le
\varepsilon_n / \pi_v$.
Equivalently, the deployed risk event is contained in the augmented risk event together with the
disagreement set, whose conditional contribution is at most $\varepsilon_n$; this gives the per-node
bound, and the Bonferroni union over $|\calH|$ nodes gives the simultaneous bound. A decision flips only if some threshold on $x$'s path
changes between the $\mathcal{D}_\text{cal}$ and the augmented computation --- a descendant $\tau_c$
(altering $x$'s residual membership $\rho_v$) or the node threshold $\tau_v^*$ itself. Each is the
largest grid point with $U(\cdot) \le \alpha$; adding one point moves every $U$ by $O(1/n)$, so under
Assumption~\ref{ass:margin} the selected grid point, and the decision, are unchanged. The residual
contribution is the margin event in which a threshold lies within $O(1/n)$ of $\alpha$ and one point
moves it a grid step.
\end{proof}

We estimate $\varepsilon_n$ by jackknife (leave-one-out over calibration points, the sample analogue
of the augmentation perturbation). Across our three models on \textsc{arc-challenge} ($\alpha = 0.1$)
and \textsc{mmlu-pro} ($\alpha = 0.3$), a one-point perturbation changes the selected threshold in
fewer than $6\%$ of cases and flips a test decision with mean probability below $0.5\%$ ($95$th
percentile below $1\%$), shrinking with calibration size (mean of order $10^{-4}$ at $n \approx
2.4\text{k}$); the rare large flips are exactly the predicted grid-step jumps at boundary thresholds. The conditional
slack the lemma adds to $\deltaadj$ is $\varepsilon_n/\pi_v$: negligible at nodes whose residual cell
carries non-negligible mass, and correspondingly weaker at sparse nodes (small $\pi_v$, such as
Gemma's leaf cells), consistent with the low participation there. It is therefore empirically
negligible for the deployed procedure at the nodes that actually answer, while
Proposition~\ref{prop:main} (split) attains the same guarantee with $\varepsilon_n = 0$
unconditionally, at the participation cost of Remark~\ref{rem:tradeoff}.

\begin{remark}[A cascade of guarantees]
\label{rem:cascade}
The three results bracket precisely the three procedures in this paper, from strongest guarantee to
most data-efficient: Proposition~\ref{prop:main} (\textsc{split}) is unconditional but calibrates
each level on a disjoint fold; Proposition~\ref{prop:insample} (in-sample with test-point
augmentation) is exact but recomputes thresholds per test point; and Lemma~\ref{lem:stability} (plain
in-sample) covers the deployed procedure of every experiment, exact up to the measured slack
$\varepsilon_n$. Two of the three subjects---split and plain in-sample---are procedures we run; the
augmented procedure is the exact bridge between them.
\end{remark}

\begin{corollary}[Zero WGER]
\label{cor:wger}
Let the Worst-Group Excess Risk be $\wger = \max_{v \in \calH} \max(R_v - \alpha, 0)$.
Under the event of Proposition~\ref{prop:main} (split) or Proposition~\ref{prop:insample}
(augmented in-sample), which occurs with probability $1 - \delta$ --- or of
Lemma~\ref{lem:stability} (plain in-sample), with probability $1 - \delta - |\calH|\varepsilon_n$
--- $\wger = 0$.
\end{corollary}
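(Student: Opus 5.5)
The plan is to read the corollary as a deterministic statement about a good event, combined with the probability bounds already established for that event. Write $E = \{\forall v \in \calH : R_v(\tau_v^*) \le \alpha\}$ for the event that every node's deployed residual risk is within budget. The one algebraic fact needed is the equivalence
\[
\wger = \max_{v \in \calH} \max(R_v - \alpha, 0) = 0 \iff R_v \le \alpha \ \text{for all } v \in \calH .
\]
This holds because each term $\max(R_v - \alpha, 0)$ is nonnegative and is zero exactly when $R_v \le \alpha$. A maximum of nonnegative terms vanishes iff every term vanishes. So $\{\wger = 0\} = E$ exactly, and the corollary reduces to lower-bounding $\PP(E)$ for each of the three procedures.

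Next I would invoke the earlier results one procedure at a time.

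\begin{itemize}
\item \textbf{Split mode.} Proposition~\ref{prop:main} gives $\PP(E^c) = \PP(\exists v : R_v(\tau_v^*) > \alpha) \le |\calH|\,\deltaadj = \delta$, hence $\PP(\wger = 0) \ge 1 - \delta$.
\item \textbf{Augmented in-sample mode.} Proposition~\ref{prop:insample} gives the same per-node bound $\deltaadj$. The same Bonferroni union over the $|\calH|$ nodes then yields $\PP(E^c) \le \delta$.
\item \textbf{Plain in-sample mode.} The simultaneous half of Lemma~\ref{lem:stability} gives $\PP(E^c) \le \delta + |\calH|\varepsilon_n$, hence $\PP(\wger = 0) \ge 1 - \delta - |\calH|\varepsilon_n$.
\end{itemize}

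The statement says the event "occurs with probability $1-\delta$". I would make this precise as "with probability at least $1-\delta$", since the union bound only gives a lower bound.

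The main obstacle is bookkeeping about which probability and which risk are meant. The propositions are phrased per node, conditionally on $\rho_v(x)=1$ and with the test point drawn jointly, whereas $R_v(\tau_v^*)$ in $\wger$ is a functional of the calibration data alone. I would first make sure $R_v$ in the corollary is the residual risk of Proposition~\ref{prop:main}, that is, $R_v(\tau_v^*)$ evaluated at the selected threshold, with randomness over $\mathcal{D}_\text{cal}$. If so, the per-node bounds are statements about the calibration-level event $\{R_v(\tau_v^*) > \alpha\}$. I would spell this out by writing the union bound directly as $\PP(E^c) \le \sum_v \PP(R_v(\tau_v^*) > \alpha)$.

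A second point to settle is uncertified nodes, where $R_v$ is undefined or vacuous. I would adopt the convention that such nodes contribute $0$ to $\wger$, since they answer nothing and so their selective risk is taken as $0 \le \alpha$. This convention keeps the identity $\{\wger = 0\} = E$ intact.
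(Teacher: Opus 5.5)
Your proposal is correct and follows the paper's route: the paper gives no separate proof, treating the corollary as immediate from the fact that $\wger = 0$ is exactly the event $\{R_v(\tau_v^*) \le \alpha \ \forall v \in \calH\}$, whose probability is bounded by the simultaneous statements of Proposition~\ref{prop:main}, Proposition~\ref{prop:insample}, and Lemma~\ref{lem:stability}. Your refinements (reading the bound as ``at least'' $1-\delta$, working with the unconditional simultaneous bounds rather than the per-node conditional ones, and letting uncertified nodes contribute zero) are sound clarifications of what the paper leaves implicit.
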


\begin{remark}[The price of formal validity]
\label{rem:tradeoff}
Proposition~\ref{prop:main} holds for the \textsc{split} variant unconditionally;
Proposition~\ref{prop:insample} makes the in-sample variant exact under test-point augmentation; and
Lemma~\ref{lem:stability} covers the plain in-sample default up to the measured slack
$\varepsilon_n$.
The two variants thus trade formal validity against statistical efficiency: splitting
calibrates each level on $\approx |\mathcal{D}_\text{cal}|/L$ examples and so answers fewer.
We quantify this gap over $200$ paired calibration/test resamples ($\delta = 0.05$); within each
resample both variants are calibrated on the \emph{same} data, so the participation cost is a
within-resample paired difference, which is why its confidence intervals are tight.
On \textsc{arc-challenge} at the paper's canonical $\alpha = 0.1$ (a global~$\to$~difficulty
hierarchy; the same three models as Section~\ref{sec:e2}), splitting costs between $0.6$~pp
($[-2.1, 3.3]$, indistinguishable from zero) and $5.5$~pp ($[3.7, 7.4]$) of participation
depending on the model, while \emph{both} variants keep every node's empirical violation rate
below $\delta$.
We do \emph{not} detect a safety penalty for the in-sample default: a paired McNemar test on the
node-violation indicators shows no significant difference after Holm correction across the three
models (smallest adjusted $p = 0.09$), and the sign of the difference is not even consistent
across models.
The honest reading is therefore not that the variants are \emph{equivalent} --- $200$ resamples
cannot establish that --- but that in the regimes we can measure, the in-sample default carries no
\emph{detectable} safety cost, while splitting buys the \emph{unconditional} guarantee of
Proposition~\ref{prop:main} for a few points of participation.

A controlled synthetic study (heterogeneous groups, descendants answering the easy examples) maps
how this cost varies: it is near-zero when scores separate well and the hierarchy is shallow
($0.9$~pp at depth~$2$) and grows with depth and weaker separation (up to $\sim\!14$~pp at
depth~$3$), with both variants holding $\delta$ throughout.
The real and synthetic numbers agree in order of magnitude in this cheap corner (shallow
hierarchy, well-separated scores), which is what licenses reading the synthetic study as a map of
the expensive regimes we cannot reach empirically.
As a second, deliberately harder data point, on \textsc{mmlu-pro} these $4$B-class models are too
weak for $\alpha = 0.1$ to admit any non-trivial answering --- a non-deployability regime in its
own right --- so only by relaxing to $\alpha = 0.3$ can we measure the trade-off there
($2.2$--$4.9$~pp, again with no detected safety difference, McNemar $p \geq 0.45$); this
corroborates the result rather than anchoring it.

We deploy the in-sample variant by default for its higher participation --- it is covered by the
stability lemma (Lemma~\ref{lem:stability}), whose slack $\varepsilon_n$ we measure to be below $1\%$,
and as shown above it incurs no detectable safety cost --- and provide the split variant as the
unconditionally-guaranteed alternative (Proposition~\ref{prop:main}) whose price is the few points of
participation measured above.
\end{remark}

\begin{remark}
The Bonferroni correction is conservative: it bounds the FWER from above.
For $|\calH|$ nodes, the per-node budget $\deltaadj = \delta / |\calH|$ may be
unnecessarily small when nodes are positively correlated (which they are, since higher
nodes contain all examples of lower nodes).
More powerful procedures such as Holm-Bonferroni or hierarchical testing~\citep{yekutieli}
could reduce the conservatism, but we use Bonferroni for its simplicity and transparency.
We investigate the effect of removing the correction in Section~\ref{sec:a4}.
\end{remark}

\section{Experimental Setup}
\label{sec:setup}

\subsection{Models and Datasets}

We evaluate on three publicly available instruction-tuned language models at the 4--8B
parameter scale: \textbf{Qwen3-4B}~\citep{qwen3}, a dense model with strong reasoning
capabilities; \textbf{Llama-3.1-8B-Instruct}~\citep{llama3}, a widely-used open-source
baseline; and \textbf{Gemma-3-4B-IT}~\citep{gemma3}, a smaller model with competitive
accuracy.
All models are evaluated in \texttt{bf16} precision unless otherwise noted.

We use two multiple-choice benchmarks.
\textbf{ARC Challenge}~\citep{clark2018arc} consists of 1,172 science questions
drawn from standardized tests, each with four answer choices.
The \texttt{allenai/ai2\_arc} distribution we load exposes no per-question subject or
difficulty field, so we treat ARC as a single \texttt{science} domain. Consequently the
domain level of the hierarchy is trivial for ARC, and the meaningful sub-grouping is by
difficulty (defined below).
\textbf{MMLU-Pro}~\citep{wang2024mmlupro} is a harder variant of MMLU with ten answer
choices per question, covering 57 academic subjects.
For each model-dataset pair, we pool all available examples and split them 50/50 into
calibration and test sets for bootstrap evaluation.

\subsection{Uncertainty Score and Threshold Grid}

Our primary uncertainty score is the \emph{selected-option NLL}.
For a multiple-choice question with $K$ options, let $\ell_k$ be the model's logit for
option $k$ and $p_k' = \exp(\ell_k)/\sum_{j=1}^K \exp(\ell_j)$ the probability
normalized over the option set. The model selects $\hat k = \arg\max_k p_k'$, and we
score the question by the negative log-probability of that selected option:
\begin{equation}
  s(x) = -\log p_{\hat k}' = -\max_k \log p_k'.
\end{equation}
Intuitively, this score is low when the model concentrates probability mass on a single
option (confident) and high when the selected option's normalized probability is small,
i.e.\ mass is spread diffusely across options (uncertain).
Because the selected option is the $\arg\max$, in this multiple-choice setting $s(x)$
coincides exactly with the min-log-prob and sequence-NLL scores, which is why those
rows are identical in Table~\ref{tab:a5}.
We study alternative scores in Section~\ref{sec:a5}.

The threshold grid $\mathcal{T}$ consists of 100 evenly-spaced quantiles of the NLL
scores in the calibration set.
We set $\alpha = 0.10$ (acceptable error rate) and $\delta = 0.05$ (confidence) in all
experiments.

\subsection{Group Hierarchy}

We construct a two-level hierarchy above the global root:

\textbf{Domain nodes (level 2).}
We use the dataset's provided domain metadata to partition examples into domain groups.
MMLU-Pro has 57 subjects grouped into broader categories. ARC, as loaded, carries no
subject metadata and collapses to a single \texttt{science} domain, so for ARC this level
is degenerate and the hierarchy reduces to global${}\rightarrow{}$difficulty.
A domain node is included in the hierarchy only if it contains at least $N_\text{min} =
30$ calibration examples.

\textbf{Difficulty nodes (level 3).}
Within each domain, we further partition examples into three difficulty bins (easy,
medium, hard) based on NLL score percentiles: the $k$-th calibration example is
assigned to hard if its NLL exceeds the 67th percentile, easy if it falls below the
33rd percentile, and medium otherwise.
Difficulty bins with fewer than $N_\text{min}$ examples are pruned.

\emph{Note on endogeneity.}
Because difficulty bins are derived from NLL percentiles---the same score used for
thresholding---the ``hard'' bin by construction contains high-NLL examples, creating
an alignment between group membership and calibrated threshold values.
A cleaner grouping would use an exogenous difficulty notion independent of the deployed
NLL score. The \texttt{allenai/ai2\_arc} distribution we load does not expose human
difficulty labels, so we construct an exogenous signal from a \emph{reference model's}
score (independent of the deployed threshold) and re-run the mixture-shift experiment
under it in Section~\ref{sec:e3}; the failure of global CRC persists, confirming the
effect is structural rather than an artifact of the endogenous bins.

The full hierarchy thus has at most $1 + |\text{domains}| + |\text{domains}| \times 3$
nodes, with pruning typically reducing this to 5--15 nodes depending on the dataset.

\subsection{Baselines}

We compare against four baselines:
\textbf{B0 (Always Answer)}: the model answers every question; no threshold is applied.
This achieves maximum participation but no risk control.
\textbf{B1 (Fixed Threshold)}: a threshold is set at the 50th percentile of calibration
scores without any statistical guarantee.
\textbf{B4 (Global CRC)}: the standard CRC procedure applied once over the full
calibration set, providing a marginal guarantee.
\textbf{B5 (Flat Groupwise CRC)}: independent CRC applied to each domain group separately,
without hierarchical structure and without Bonferroni correction across groups.

\subsection{Evaluation Protocol}

We use bootstrap resampling with 500 trials to estimate variance.
In each trial, the calibration and test sets are independently resampled by random
permutation of the full data pool (50\% / 50\% split).
We report: (i) \textbf{risk mean} $\pm$ \textbf{std}, the mean and standard deviation of
the empirical error rate across bootstrap trials; (ii) \textbf{participation mean}, the
mean fraction of test examples that receive a prediction; (iii) \textbf{WGER}, the
worst-group excess risk; and (iv) \textbf{violation rate}, the fraction of bootstrap
trials in which the empirical risk exceeds $\alpha = 0.10$.
A method with violation rate $\leq \delta = 0.05$ satisfies the CRC guarantee empirically.
Five hundred bootstrap trials provide good statistical resolution: by the rule of three,
a true violation rate of up to $3/500 \approx 0.6\%$ is compatible with an observed rate
of zero at conventional confidence.
Reported ``0\% violation rates'' should therefore be interpreted as empirical upper bounds
of approximately 0.6\%, not as certified zeros. (An earlier version of this evaluation
used 50 trials, for which the corresponding upper bound was $\approx$6\%; we re-ran all
bootstrap experiments at 500 trials, and report only those numbers here.)

\section{Main Results}
\label{sec:experiments}

\subsection{IID Sanity Check (E1)}
\label{sec:e1}

We first verify that global CRC satisfies its marginal guarantee in the IID setting.
Table~\ref{tab:e1} reports results for all models on ARC Challenge.

\begin{table}[t]
\centering
\caption{IID results on ARC Challenge. $\alpha{=}0.10$, $\delta{=}0.05$.
Violation rate $\leq 0.05$ satisfies the CRC guarantee. Best method per model shown in
\textbf{bold}. Identical rows are shaded identically.}
\label{tab:e1}
\setlength{\tabcolsep}{5pt}
\begin{tabular}{llcccc}
\toprule
Model & Method & Risk & Participation & WGER & Viol.\ rate \\
\midrule
\multirow{4}{*}{Qwen3-4B}
  & B0 — Always answer        & $0.126_{\pm 0.013}$ & $1.000$ & $0.026$ & $0.980$ \\
  & B1 — Fixed threshold      & $0.087_{\pm 0.012}$ & $0.899$ & $0.001$ & $0.142$ \\
  & B4 — Global CRC           & $0.070_{\pm 0.021}$ & $0.849$ & $0.001$ & $0.110$ \\
  & B5 — Groupwise CRC        & $0.070_{\pm 0.021}$ & $0.849$ & $0.001$ & $0.110$ \\
\midrule
\multirow{4}{*}{Gemma-3-4B}
  & B0 — Always answer        & $0.223_{\pm 0.016}$ & $1.000$ & $0.123$ & $1.000$ \\
  & B1 — Fixed threshold      & $0.188_{\pm 0.016}$ & $0.896$ & $0.088$ & $1.000$ \\
  & B4 — Global CRC           & $0.054_{\pm 0.040}$ & $0.294$ & $0.001$ & $0.094$ \\
  & B5 — Groupwise CRC        & $0.054_{\pm 0.040}$ & $0.294$ & $0.001$ & $0.094$ \\
\midrule
\multirow{4}{*}{Llama-3.1-8B}
  & B0 — Always answer        & $0.176_{\pm 0.015}$ & $1.000$ & $0.076$ & $1.000$ \\
  & B1 — Fixed threshold      & $0.136_{\pm 0.014}$ & $0.897$ & $0.036$ & $0.998$ \\
  & B4 — Global CRC           & $0.067_{\pm 0.023}$ & $0.695$ & $0.001$ & $0.090$ \\
  & B5 — Groupwise CRC        & $0.067_{\pm 0.023}$ & $0.695$ & $0.001$ & $0.090$ \\
\bottomrule
\end{tabular}
\end{table}

Several patterns emerge.
First, global CRC (B4) successfully controls the marginal risk for all three models:
mean risk is 0.054--0.070, well below $\alpha = 0.10$.
The violation rate for Qwen is 0.11, which exceeds $\delta = 0.05$ nominally; this
persists at 500 bootstrap trials, so it is not a low-resolution artifact but a genuine
finite-\emph{calibration}-sample effect: with only $\sim$586 calibration examples the
Clopper--Pearson bound is loose, and about 11\% of resamples fall slightly above the
nominal level. This is expected behavior for CRC with small calibration sets, and it is
a caveat on the baseline that propagates into all downstream comparisons.

Second, and perhaps most surprisingly, \textbf{global CRC and flat groupwise CRC are
identical} across all models.
The rows for B4 and B5 are numerically indistinguishable.
This is not a coincidence: in the IID setting, where the joint distribution of examples
and group memberships is the same in calibration and test, the global threshold is
already calibrated for every group.
The flat groupwise procedure applies independent CRC to each group and thereby
loses statistical power (smaller $n_v$ per group, higher variance), ending up
at the same threshold.

This finding has an important implication: flat groupwise CRC does not add value in
the IID case, and it provides no additional guarantees because it does not apply a
simultaneous correction.
Its advantage over global CRC should only emerge in settings with group heterogeneity
or distributional shift—which is precisely what \hgcrc{} is designed to address.

Third, we note that Gemma-3-4B has a substantially lower participation rate than the
other two models (29.4\% vs.\ 84.9\% for Qwen and 69.5\% for Llama).
This is a structural property of the model on this dataset: Gemma assigns probability
mass more diffusely over options, yielding high NLL scores even on questions it answers
correctly.
As a result, the CRC threshold is set conservatively, and many examples are abstained.
This will remain a recurring observation throughout our experiments.

\subsection{Group Heterogeneity: Main Results (E2)}
\label{sec:e2}

Having established the IID baseline, we now evaluate \hgcrc{} in the setting for which
it was designed: a dataset with genuine group heterogeneity. On ARC Challenge, which
carries a single domain, this heterogeneity is across difficulty levels (the general
multi-domain case is exercised on MMLU-Pro).
Table~\ref{tab:e2} reports the full comparison.

\begin{table}[t]
\centering
\caption{Group heterogeneity results on ARC Challenge. \hgcrc{} is shown in bold.
ARC has a single domain, so the deployed hierarchy is global${}\to{}$difficulty (the
domain node coincides with the global node).}
\label{tab:e2}
\setlength{\tabcolsep}{5pt}
\begin{tabular}{llcccc}
\toprule
Model & Method & Risk & Participation & WGER & Viol.\ rate \\
\midrule
\multirow{5}{*}{Qwen3-4B}
  & B0 — Always answer     & $0.126_{\pm 0.013}$ & $1.000$ & $0.026$ & $0.980$ \\
  & B1 — Fixed threshold   & $0.087_{\pm 0.012}$ & $0.899$ & $0.001$ & $0.142$ \\
  & B4 — Global CRC        & $0.070_{\pm 0.021}$ & $0.849$ & $0.001$ & $0.110$ \\
  & B5 — Groupwise CRC     & $0.070_{\pm 0.021}$ & $0.849$ & $0.001$ & $0.110$ \\
  \rowcolor{ours}& \textbf{\hgcrc{} (ours)} & $\mathbf{0.018_{\pm 0.013}}$ & $\mathbf{0.478}$ & $\mathbf{0.000}$ & $\mathbf{0.000}$ \\
\midrule
\multirow{5}{*}{Gemma-3-4B}
  & B0 — Always answer     & $0.223_{\pm 0.016}$ & $1.000$ & $0.123$ & $1.000$ \\
  & B1 — Fixed threshold   & $0.188_{\pm 0.016}$ & $0.896$ & $0.088$ & $1.000$ \\
  & B4 — Global CRC        & $0.054_{\pm 0.040}$ & $0.294$ & $0.001$ & $0.094$ \\
  & B5 — Groupwise CRC     & $0.054_{\pm 0.040}$ & $0.294$ & $0.001$ & $0.094$ \\
  \rowcolor{ours}& \textbf{\hgcrc{} (ours)} & $\mathbf{0.025_{\pm 0.040}}$ & $\mathbf{0.084}$ & $\mathbf{0.000}$ & $\mathbf{0.030}$ \\
\midrule
\multirow{5}{*}{Llama-3.1-8B}
  & B0 — Always answer     & $0.176_{\pm 0.015}$ & $1.000$ & $0.076$ & $1.000$ \\
  & B1 — Fixed threshold   & $0.136_{\pm 0.014}$ & $0.897$ & $0.036$ & $0.998$ \\
  & B4 — Global CRC        & $0.067_{\pm 0.023}$ & $0.695$ & $0.001$ & $0.090$ \\
  & B5 — Groupwise CRC     & $0.067_{\pm 0.023}$ & $0.695$ & $0.001$ & $0.090$ \\
  \rowcolor{ours}& \textbf{\hgcrc{} (ours)} & $\mathbf{0.020_{\pm 0.012}}$ & $\mathbf{0.475}$ & $\mathbf{0.000}$ & $\mathbf{0.000}$ \\
\bottomrule
\end{tabular}
\end{table}

A careful reader will notice that the Global CRC and Flat Groupwise CRC rows in
Table~\ref{tab:e2} are numerically identical to the corresponding rows in Table~\ref{tab:e1}.
This is expected: both experiments draw from the same data pool with a random 50/50 split,
so the IID partition introduces no systematic group imbalance.
The 11\% violation rate of Global CRC in E2 reflects the same finite-sample behavior
documented in E1---not a group heterogeneity effect.
What Table~\ref{tab:e2} isolates is instead the \emph{simultaneous} guarantee.
Global CRC satisfies its marginal guarantee most of the time, but provides no certificate
that \emph{every} group simultaneously meets the budget in the same trial.
\hgcrc{} provides exactly that simultaneous certificate.
Its participation cost relative to global CRC---quantified below as a 22.0--37.1 pp
reduction for the two primary pairs---comes mainly from the leaf-first residual
calibration rather than the Bonferroni correction: the A4 ablation
(Table~\ref{tab:ablations}) attributes ${\approx}10.6$~pp to the correction
($0.584 \to 0.478$), with the residual calibration accounting for the remainder
($0.849 \to 0.584$).
The genuine failure mode of marginal guarantees---where group heterogeneity causes
a single threshold to systematically over-serve one group at the expense of another---is
demonstrated under controlled mixture shift in Section~\ref{sec:e3}.

\textbf{\hgcrc{} eliminates violations.}
For both Qwen3-4B and Llama-3.1-8B, \hgcrc{} achieves a \textbf{violation rate of
exactly 0.000} across all 500 bootstrap trials.
This is in stark contrast to global CRC, which violates in 9--11\% of trials on the
same data.
For Gemma-3-4B, the violation rate drops from 0.094 to 0.030—also below the target
$\delta = 0.05$.

\textbf{\hgcrc{} achieves WGER = 0.}
For Qwen and Llama, the worst-group excess risk is identically zero, meaning that in
every bootstrap trial, \emph{every} certified group meets the $\alpha = 0.10$ budget.
This is the practical meaning of simultaneous group certification: no group is
sacrificed for the average.
Gemma achieves WGER = 0.000 as well, with the caveat that its participation is so low
(0.084) that the worst-group statistic is computed over very few answered examples.

\textbf{Participation cost is modest.}
The leaf-first policy is conservative: it applies the tightest threshold available,
which necessarily abstains more than the global threshold.
For Qwen, participation decreases from 0.849 to 0.478—a reduction of 37.1 percentage
points.
For Llama, it decreases from 0.695 to 0.475—a reduction of 22.0 percentage points.
These costs are the price of group-conditional guarantees under residual calibration:
each node is calibrated on the subpopulation it actually sees at deployment, which for
parent nodes excludes examples already answered by certified descendants.
This subpopulation tends to be harder on average, requiring more conservative thresholds.
If all groups were identical, the hierarchical threshold would coincide with the global
threshold and the cost would be zero.

\textbf{Risk is lower under \hgcrc{}.}
Because \hgcrc{} abstains more aggressively on hard subgroups, the examples it does
answer tend to be easier and lower-risk.
Mean risk drops from 0.070 to 0.018 for Qwen (--74\%) and from 0.067 to 0.020 for Llama
(--70\%).
This is a secondary benefit: the method simultaneously achieves group-level guarantees
and improves the overall quality of answers given.

\textbf{The case of Gemma.}
The low participation of \hgcrc{} for Gemma (0.084) merits discussion.
Even at the global level, Gemma participates in only 29.4\% of questions.
The residual calibration further reduces participation: parent nodes are calibrated
on examples not already answered by leaf nodes, a harder subpopulation, requiring
more conservative thresholds.
This is not a bug in the method—Gemma genuinely cannot certify most of its answers on
ARC Challenge under $\alpha = 0.10$—but it limits the practical utility of \hgcrc{}
for this model on this dataset.
We discuss this limitation further in Section~\ref{sec:discussion}.

\subsection{Why Marginal Guarantees Fail: Mixture Shift (E3)}
\label{sec:e3}

The results of Section~\ref{sec:e2} demonstrate that \hgcrc{} provides stronger
per-group guarantees than global CRC.
But one might ask: is this difference practically relevant?
If real test distributions are close to the calibration distribution, global CRC may
suffice.
The mixture shift experiment directly tests this assumption by introducing a controlled
mismatch between calibration and test group compositions.

\textbf{Setup.}
We construct a calibration set with equal group weights (1/3 each for easy, medium, hard)
and a test set with a shifted mixture: hard examples are over-represented
($w_\text{hard}=0.50$, $w_\text{med}=0.33$, $w_\text{easy}=0.17$).
This simulates a common real-world scenario: the deployment distribution shifts toward
harder inputs over time, for example as easier questions are resolved and harder ones
accumulate.
We compare the violation rate of global CRC versus groupwise CRC under this shift.

\begin{table}[t]
\centering
\caption{Mixture shift results on ARC Challenge. Calibration mixture: equal weights
(1/3 each). Test mixture: hard over-represented (0.50/0.33/0.17).
Groupwise CRC controls per-group risk for Qwen and Llama; global CRC fails for all three.
For Gemma, groupwise also exceeds $\delta=0.05$ (0.11), reflecting its poorly calibrated
scores rather than a failure of the per-group principle.}
\label{tab:e3}
\begin{tabular}{lcccc}
\toprule
Model & Global risk & Global viol. & Groupwise risk & Groupwise viol. \\
\midrule
Qwen3-4B     & $0.099_{\pm 0.028}$ & $\mathbf{0.47}$ & $0.034_{\pm 0.019}$ & $\mathbf{0.00}$ \\
Llama-3.1-8B & $0.086_{\pm 0.032}$ & $0.32$ & $0.032_{\pm 0.021}$ & $0.00$ \\
Gemma-3-4B   & $0.051_{\pm 0.044}$ & $0.15$ & $0.047_{\pm 0.039}$ & $0.11$ \\
\bottomrule
\end{tabular}
\end{table}

The results, shown in Table~\ref{tab:e3} and Figure~\ref{fig:e3}, are striking.
Global CRC violates the risk budget in \textbf{47\%} of bootstrap trials for Qwen and
\textbf{32\%} for Llama.
These are not marginal failures: the mean risk under global CRC sits right at the budget
(0.099 for Qwen, essentially $\alpha = 0.10$) and exceeds it in nearly half of trials on
the shifted test distribution.
By contrast, groupwise CRC (which uses separate thresholds per difficulty group) achieves
a violation rate of 0.00 for Qwen and Llama; for Gemma it is 0.11, lower than global's
0.15 but still above $\delta=0.05$, because Gemma's scores rank examples too poorly for
any thresholding scheme to certify reliably under shift.

This experiment reveals the mechanism of failure.
Global CRC calibrates a single threshold $\tau^*$ such that the \emph{average} error
rate over the calibration mixture (1/3 easy, 1/3 medium, 1/3 hard) is at most $\alpha$.
On easy examples, the model is accurate and can answer aggressively; on hard examples,
it must abstain more.
But $\tau^*$ is a compromise: it is too permissive for hard examples (causing violations)
and too conservative for easy examples (causing unnecessary abstentions).
When the test distribution shifts toward hard examples, the compromise threshold
systematically under-serves that group.

Groupwise CRC avoids this by estimating a separate threshold for each difficulty group,
reflecting each group's true error distribution.
Hard examples get a conservative threshold; easy examples get an aggressive one.
The result is that each group's risk is controlled independently of the others, and the
global risk under any test mixture is bounded.

This experiment provides the core empirical motivation for \hgcrc{}: group-specific
thresholds are necessary, and the hierarchical structure allows them to be estimated
at multiple levels of granularity.

\textbf{What \hgcrc{} adds over flat groupwise CRC.}
It is important to note that the groupwise CRC used here is equivalent to Baseline B5:
separate thresholds per difficulty group, calibrated independently, without hierarchical
fallback or Bonferroni correction.
B5 already resolves the 47\% violation because per-group thresholds capture the
heterogeneous error distributions.
What B5 does \emph{not} provide is a \emph{simultaneous} guarantee: running $K$
independent tests each at level $\delta$ yields a family-wise error rate of approximately
$K\delta$ under independence, not $\delta$.
\hgcrc{} extends B5 with two structural additions: (1) the Bonferroni correction, which
restores the simultaneous guarantee at level $\delta$ (confirmed by the ablation in
Section~\ref{sec:a4}); and (2) the hierarchical structure with leaf-first fallback, which
handles groups too small to certify directly by falling back to a coarser certified
ancestor rather than abstaining all group members.
The value of \hgcrc{} over B5 is therefore the formal simultaneous validity and graceful
degradation for sparse groups---not the per-group thresholding idea itself, which B5
already implements.

\begin{figure}[t]
  \centering
  \includegraphics[width=0.6\linewidth]{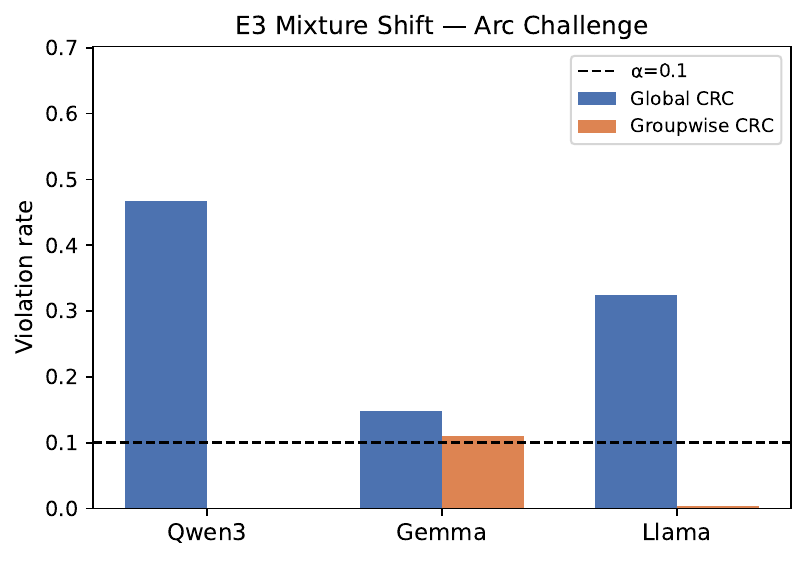}
  \caption{Mixture shift violation rates (ARC Challenge). Global CRC (blue) violates the
  risk budget in 15--47\% of trials. Groupwise CRC (orange) controls per-group risk.
  The dashed line marks $\alpha = 0.10$.}
  \label{fig:e3}
\end{figure}

\textbf{Is the effect structural or an endogeneity artifact?}
The difficulty bins above are defined by percentiles of the deployed NLL score, so
``shift toward hard'' coincides with ``shift toward high deployed-NLL,'' which could
mechanically break an NLL threshold by construction. To test whether the failure is an
artifact of this endogeneity, we redefine difficulty using an \emph{exogenous} signal: a
\emph{reference} model's per-example score (joined by \texttt{example\_id}), so that
group membership is no longer a function of the deployed model's threshold. We then
repeat the mixture shift. Table~\ref{tab:e3exo} shows that the effect persists:
under an exogenous difficulty shift, global CRC still violates in 31--39\% of trials
while groupwise CRC stays at or near the budget. Endogeneity does inflate the
effect somewhat for Qwen (47\% $\to$ 31\%), but for Llama and Gemma the exogenous
violation rate is as high or higher. We therefore conclude that the mixture-shift
failure of marginal guarantees is \emph{structural}, not an artifact of how difficulty
groups are constructed.

\begin{table}[t]
\centering
\caption{Endogenous vs.\ exogenous difficulty under mixture shift (ARC Challenge). In the
exogenous rows, difficulty is defined by a different model's score (in parentheses),
independent of the deployed threshold. Global CRC fails under shift in both cases.}
\label{tab:e3exo}
\setlength{\tabcolsep}{5pt}
\begin{tabular}{llcc}
\toprule
Model (deployed) & Difficulty source & Global viol. & Groupwise viol. \\
\midrule
\multirow{2}{*}{Qwen3-4B}
  & endogenous (NLL)      & $0.47$ & $0.00$ \\
  & exogenous (Llama)     & $0.31$ & $0.02$ \\
\midrule
\multirow{2}{*}{Llama-3.1-8B}
  & endogenous (NLL)      & $0.32$ & $0.00$ \\
  & exogenous (Qwen)      & $0.36$ & $0.02$ \\
\midrule
\multirow{2}{*}{Gemma-3-4B}
  & endogenous (NLL)      & $0.15$ & $0.11$ \\
  & exogenous (Qwen)      & $0.39$ & $0.07$ \\
\bottomrule
\end{tabular}
\end{table}

\section{Ablation Studies}
\label{sec:ablations}

\subsection{Effect of Hierarchy Depth (A1)}
\label{sec:a1}

How much does each level of the hierarchy contribute?
An important caveat governs how to read this ablation on ARC Challenge: the dataset
carries a single domain label (``science''), so ARC exposes no real domain level, and
its deployed hierarchy is global${}\to{}$difficulty (two levels). We nonetheless report
an intermediate \emph{domain} row, but it does not introduce genuine domain structure:
the domain node coincides with the global node, so this row isolates the effect of the
$\delta/2$ Bonferroni tightening on a redundant node rather than any subject-matter
grouping. The three configurations are therefore: (i) \emph{global only}, equivalent to
standard CRC; (ii) \emph{domain}, which adds the (redundant) single-domain node; and
(iii) \emph{domain+difficulty}, the deployed two-level global${}\to{}$difficulty
hierarchy. The domain level is exercised as a genuine grouping only on MMLU-Pro, which
has 57 real subjects (Appendix~\ref{app:mmlu}), with the caveat that participation there
is near zero.

\begin{table}[t]
\centering
\caption{Ablation of hierarchy depth (ARC Challenge, Qwen3-4B) and Bonferroni
correction. ARC Challenge has a single domain (``science''), so its hierarchy is
global${}\to{}$difficulty; the ``Domain'' depth row therefore adds a node that coincides
with the global node and isolates the effect of the $\delta/2$ Bonferroni tightening
alone. The correction (A4) rows compare Bonferroni against no correction over the full
five-node hierarchy (global ${}+{}$ domain ${}+{}$ three difficulty leaves). All other
hyperparameters fixed.}
\label{tab:ablations}
\begin{tabular}{llcccc}
\toprule
Ablation & Variant & Risk & Participation & WGER & Viol.\ rate \\
\midrule
\multirow{3}{*}{Depth (A1)}
  & Global only        & $0.070_{\pm 0.021}$ & $0.849$ & $0.001$ & $0.110$ \\
  & Domain             & $0.065_{\pm 0.021}$ & $0.830$ & $0.001$ & $0.054$ \\
  & Domain+difficulty  & $0.018_{\pm 0.013}$ & $0.478$ & $0.000$ & $\mathbf{0.000}$ \\
\midrule
\multirow{2}{*}{Correction (A4)}
  & Bonferroni         & $0.018_{\pm 0.013}$ & $0.478$ & $0.000$ & $0.000$ \\
  & None               & $0.026_{\pm 0.014}$ & $0.584$ & $0.000$ & $0.000$ \\
\bottomrule
\end{tabular}
\end{table}

Table~\ref{tab:ablations} (top) shows the progression.
Global-only CRC violates in 11.0\% of trials.
The intermediate \emph{domain} row reduces violations to 5.4\%, but on ARC this is not
the effect of a domain grouping: the single-domain node coincides with the global node,
so the only change is that its threshold is calibrated at $\deltaadj = \delta/2$ instead
of $\delta$. The tighter confidence level alone accounts for the drop, and at 500 trials
this configuration still sits marginally above $\delta = 0.05$ (0.054), so it does not
certify the guarantee.
Adding the difficulty level brings violations to exactly 0.0\%---on ARC it is the
difficulty level, not any domain structure, that clears the budget. This makes the ARC
hierarchy a genuine two-level global${}\to{}$difficulty structure whose load-bearing
grouping is the difficulty binning. We note that these bins are defined by percentiles of
the deployed NLL score, and Section~\ref{sec:e3} examines directly (via an exogenous
difficulty signal) whether this endogeneity is what drives the effect; it is not.

The participation cost falls from 0.849 (global) to 0.478 (global${}+{}$difficulty).
The drop at the difficulty level reflects the residual calibration: parent
nodes are calibrated only on examples not already answered by certified leaf nodes,
which are systematically harder, requiring more conservative thresholds.
This is a reasonable exchange: the method eliminates all violations and achieves WGER = 0
with a theoretically sound guarantee that aligns calibration and deployment populations.

\subsection{Necessity of Bonferroni Correction (A4)}
\label{sec:a4}

A natural question is whether the Bonferroni correction is necessary, or whether
the hierarchy alone suffices.
Table~\ref{tab:ablations} (bottom) compares calibration with and without the correction
over the full ARC hierarchy (global ${}+{}$ domain ${}+{}$ three difficulty leaves,
$|\calH| = 5$).

The empirical answer on ARC is that the correction is \emph{not} needed to hold the
budget: both the corrected ($\deltaadj = \delta/5$) and the uncorrected ($\delta$ per
node) variants achieve a $0.0\%$ violation rate and WGER${}=0$ across all 500 trials.
The two differ only in conservatism---Bonferroni answers less (participation $0.478$
vs.\ $0.584$) and runs at lower realized risk ($0.018$ vs.\ $0.026$). With only five
nodes and well-separated NLL scores, the uncorrected per-node calibration is already
conservative enough that the multiplicity it ignores does not materialize.
(Our earlier report of a large ``2\%-to-16\%'' or ``5.4\%-to-11\%'' gap was an artifact
of a degenerate ablation that corrected $\delta$ over a single node; over the real
five-node hierarchy the empirical gap closes.)

This does not make the correction dispensable---its role is theoretical, and it scales
with $|\calH|$. Without it, each node is calibrated at the full level $\delta$, so the
family-wise error rate across $|\calH|$ nodes may reach $|\calH| \cdot \delta$; the
correction is what makes the \emph{simultaneous} claim of
Propositions~\ref{prop:main} and~\ref{prop:insample} valid, independent of any empirical
run. On ARC ($|\calH| = 5$) that theoretical bound leaves ample slack, so removing the
correction happens to remain safe. When many nodes are tested the effect becomes visible:
on MMLU-Pro (57 subjects), removing the correction raises the violation rate from
$0.6\%$ to $14.0\%$---though this is in a regime where the model answers almost nothing
(participation ${\approx}0.001$--$0.018$), so the inflation certifies the theory rather
than a deployable operating point.

The honest summary is therefore: the Bonferroni correction is \emph{required for the
theoretical simultaneous guarantee} at any $|\calH|$, while its \emph{empirical} cost and
benefit are an $|\calH|$-dependent participation trade-off---negligible on ARC's shallow
hierarchy and pronounced only when many nodes are corrected at once.

\subsection{Uncertainty Score Comparison (A5)}
\label{sec:a5}

\begin{table}[t]
\centering
\caption{Uncertainty score comparison on ARC Challenge (Qwen3-4B). NLL-based scores
are largely equivalent; probe yields negligible participation.}
\label{tab:a5}
\begin{tabular}{lccc}
\toprule
Score & Risk & Participation & Viol.\ rate \\
\midrule
Selected-option NLL (default) & $0.070$ & $0.849$ & $0.110$ \\
Min log-prob             & $0.070$ & $0.849$ & $0.110$ \\
MC margin                & $0.070$ & $0.841$ & $0.104$ \\
Token entropy            & $0.071$ & $0.850$ & $0.100$ \\
Sequence NLL             & $0.070$ & $0.849$ & $0.110$ \\
Meta-learner             & $0.070$ & $0.772$ & $0.112$ \\
Linear probe             & $0.013$ & $0.050$ & $0.100$ \\
\bottomrule
\end{tabular}
\end{table}

Table~\ref{tab:a5} shows that the NLL-based scores yield essentially the same
participation and violation rates. Three of them---selected-option NLL, min log-prob,
and sequence NLL---are in fact \emph{identical by construction} in this multiple-choice
setting (all equal $-\max_k \log p_k'$), so their matching rows are not independent
evidence; MC margin is a genuinely distinct score that nonetheless performs comparably.
Token entropy performs comparably but with slightly higher variance.
The meta-learner, which combines multiple scores through a trained regressor, loses
participation (0.772 vs.\ 0.849) without a corresponding reduction in violations—a
worse trade-off.
The linear probe classifier, trained to predict correctness, performs poorly: it
achieves only 5.0\% participation because its confidence estimates are poorly calibrated
for this thresholding task.

These results suggest that \textbf{the choice of uncertainty score is not a critical
design decision} for CRC-based selective prediction: any NLL-based score that ranks
examples by confidence produces similar results.
We use the selected-option NLL as the default throughout, as it requires no additional
computation beyond the forward pass.

\section{Robustness Analysis}
\label{sec:robustness}

\subsection{Domain Shift (E4)}
\label{sec:e4}

We evaluate the ability of a threshold calibrated on one dataset to transfer to a
different dataset.
We consider two domain-shift pairs: MMLU-Pro $\to$ ARC Challenge (both multiple-choice)
and NQ Open $\to$ TriviaQA (both open-domain QA).
For each pair, we study four strategies: (i) \emph{source-only}, which uses only the
source-domain calibration; (ii) \emph{target-only}, which uses $n_\text{target}$ labeled
examples from the target domain; (iii) \emph{weighted}, which re-weights source examples
by estimated density ratio; and (iv) \emph{combined}, which pools source and target
calibration data.

\begin{table}[t]
\centering
\caption{MMLU-Pro $\to$ ARC participation by $n_\text{target}$ (Llama-3.1-8B-Instruct).
Source-only transfers partially; target-only scales from 0 to 71\%.}
\label{tab:e4}
\begin{tabular}{lccccccc}
\toprule
Strategy & $n{=}0$ & $n{=}25$ & $n{=}50$ & $n{=}100$ & $n{=}250$ & $n{=}500$ \\
\midrule
Source only   & $0.288$ & $0.288$ & $0.287$ & $0.287$ & $0.287$ & $0.288$ \\
Weighted      & $0.378$ & $0.378$ & $0.377$ & $0.377$ & $0.377$ & $0.377$ \\
Target only   & $0.000$ & $0.000$ & $0.277$ & $0.559$ & $0.682$ & $0.704$ \\
Combined      & $0.288$ & $0.327$ & $0.337$ & $0.358$ & $0.386$ & $0.417$ \\
\bottomrule
\end{tabular}
\end{table}

Table~\ref{tab:e4} shows results for Llama on the MMLU $\to$ ARC pair.
Source-only transfer achieves 28.8\% participation, which is lower than ARC IID (69.5\%)
but non-negligible—MMLU and ARC are both multiple-choice reasoning tasks, so the score
distributions are partially compatible.
For Qwen and Gemma, source-only participation is zero: the MMLU threshold is too
conservative to certify any ARC example, suggesting a larger distributional shift in
score space for these models.

The target-only strategy shows a clear sample-efficiency curve: 0\% participation with
fewer than 50 target examples, 27.7\% with 50, and reaching 70.4\% with 500—close to
the IID level.
Fewer than 50 examples provide insufficient statistical power for the CP bound: with
$n_\text{target} = 25$, even the zero-error threshold cannot be certified at $\alpha =
0.10$ with $\delta = 0.05$.
This minimum sample requirement is a practical constraint: CRC-based methods cannot be
deployed in zero-shot domain transfer scenarios.

The combined strategy improves over source-only for Llama (41.7\% with $n=500$) by
incorporating target data, but it does not outperform target-only at equal $n$.
This suggests that the source distribution provides useful information about the general
behavior of the uncertainty score, but target-specific examples are necessary for
domain-adapted certification.

The NQ $\to$ TriviaQA transfer shows zero participation under all strategies and all
$n_\text{target}$, indicating that open-domain QA tasks are structurally incompatible
for threshold transfer: the score distributions and error rates differ too substantially
across open-domain datasets for source calibration to be useful.

\subsection{Prompt Shift (E5)}
\label{sec:e5}

Prompt engineering is a common practice in LLM deployment, but changes in prompt
formatting alter the model's probability distribution over answers, thereby invalidating
a previously calibrated threshold.
We study three prompt templates: \emph{default} (standard few-shot), \emph{reasoning}
(chain-of-thought prompt with explicit reasoning steps), and \emph{formal} (formal
academic phrasing).
For each, we measure the risk and participation of the transferred threshold (calibrated
on default examples) and the recalibrated threshold (calibrated on examples from the new
template).

\begin{table}[t]
\centering
\caption{Prompt shift results (ARC Challenge, Qwen3-4B). Transferred threshold (from
default template) vs.\ recalibrated threshold (using examples from the new template).}
\label{tab:e5}
\begin{tabular}{lcccc}
\toprule
Template & Risk (transfer) & Risk (recalib) & Part (transfer) & Part (recalib) \\
\midrule
Default    & $0.073$ & $0.073$ & $0.869$ & $0.869$ \\
Reasoning  & $\mathbf{0.203}$ & $0.067$ & $0.674$ & $0.360$ \\
Formal     & $0.078$ & $0.078$ & $0.900$ & $0.900$ \\
\bottomrule
\end{tabular}
\end{table}

The results reveal a template-dependent pattern.
The \emph{formal} template produces a distribution close to the default: transfer risk
is 0.078, only marginally above the default level, and recalibration has no effect.
The reasoning template, by contrast, induces a severe shift: the transferred threshold
yields a risk of 0.203—more than twice the budget $\alpha = 0.10$.
This is because reasoning-prompted outputs have a higher entropy: the model distributes
probability over multiple reasoning steps before committing to an answer, yielding
higher NLL scores even on correctly-answered questions.
The threshold from the default calibration, which was chosen for lower-entropy default
outputs, is too aggressive for reasoning outputs: it answers many questions that the
reasoning model is actually uncertain about, leading to high risk.

Recalibration immediately restores the guarantee: a threshold calibrated on reasoning
examples achieves risk of 0.067 < 0.10.
However, it comes at a significant participation cost (0.360 vs.\ 0.674 under transfer),
reflecting the genuinely higher entropy of reasoning-prompted outputs.
The message is clear: \textbf{prompt changes require recalibration}.
The good news is that recalibration is cheap—a few hundred labeled examples from the
new template suffice—and the guarantee is restored without any retraining.

\subsection{Label Noise Robustness (E7)}
\label{sec:e7}

Real-world labels are noisy: human annotators make mistakes, automatic evaluation
pipelines occasionally mis-grade, and ground-truth answers may be ambiguous.
We study the sensitivity of \hgcrc{} to label noise by injecting random label flips
at rate $\eta \in \{0, 0.01, 0.05, 0.10\}$ into the calibration labels.

\begin{table}[t]
\centering
\caption{Label noise robustness (ARC Challenge, Qwen3-4B). As $\eta$ increases, the
method becomes more conservative (lower participation) rather than violating (higher risk).}
\label{tab:e7}
\begin{tabular}{lcccc}
\toprule
$\eta$ & Risk & Participation & Violation rate \\
\midrule
$0.00$ & $0.070_{\pm 0.021}$ & $0.849_{\pm 0.041}$ & $0.11$ \\
$0.01$ & $0.065_{\pm 0.021}$ & $0.830_{\pm 0.045}$ & $0.06$ \\
$0.05$ & $0.028_{\pm 0.019}$ & $0.566_{\pm 0.197}$ & $0.00$ \\
$0.10$ & $0.002_{\pm 0.006}$ & $0.085_{\pm 0.174}$ & $0.00$ \\
\bottomrule
\end{tabular}
\end{table}

The results, shown in Table~\ref{tab:e7} and Figure~\ref{fig:e7}, reveal a highly
desirable property: \textbf{label noise causes the method to become more conservative,
not less safe}.
As $\eta$ increases from 0 to 0.10, participation drops from 0.849 to 0.085, while
violations drop from 0.11 to 0.00.
The mechanism is straightforward: noisy labels artificially inflate the empirical error
rate $k_v(\tau) / n_v(\tau)$, which tightens the CP upper bound.
To remain certified, the threshold must be more conservative, abstaining on more examples.

At $\eta = 0.01$, the effect is mild: participation drops by only 1.9 pp, and violations
are already nearly halved (0.06).
At $\eta = 0.05$, the system abstains on nearly half of all examples—a significant
participation loss, but one that reflects genuine uncertainty about label quality.
At $\eta = 0.10$, the system becomes nearly non-functional (8.5\% participation), but it
never violates the risk budget.

This graceful degradation is a valuable property for high-stakes deployments where label
quality cannot be guaranteed.
A system that fails loudly (by abstaining excessively) is preferable to one that fails
silently (by answering incorrectly with high confidence).
The behavior here is the former.

\begin{figure}[t]
  \centering
  \includegraphics[width=0.7\linewidth]{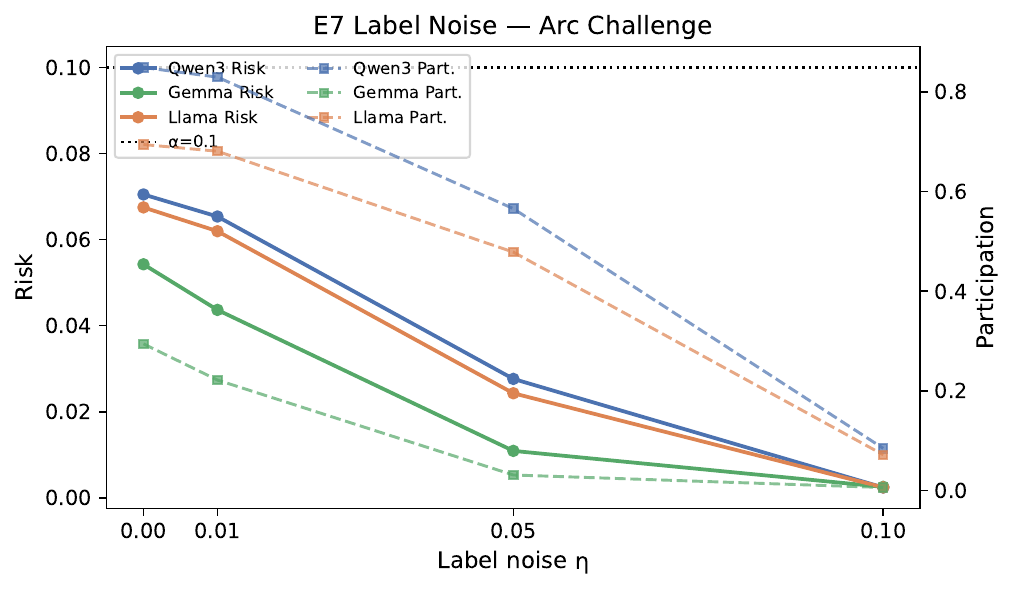}
  \caption{Label noise robustness (ARC Challenge). Risk (solid lines, left axis) and
  participation (dashed lines, right axis) as a function of noise rate $\eta$ for three
  models. Noise causes conservative abstention, not risk violation.}
  \label{fig:e7}
\end{figure}

\subsection{Quantization Robustness (E8)}
\label{sec:e8}

Model quantization is a standard technique for reducing inference cost in production
deployments.
Quantizing weights from 16-bit floating point (bf16) to 8-bit integer (int8) or 4-bit
integer (int4) reduces memory by 2$\times$ and 4$\times$ respectively, with a typically
small accuracy penalty.
However, quantization also changes the model's probability distributions, which may
affect the uncertainty score distribution and thereby invalidate a calibration
performed at full precision.

\begin{table}[t]
\centering
\caption{Quantization robustness (ARC Challenge). Accuracy decreases modestly; risk
stays below $\alpha{=}0.10$ at all precision levels.}
\label{tab:e8}
\begin{tabular}{llcccc}
\toprule
Model & Precision & Accuracy & Participation & Risk & Viol.\ rate \\
\midrule
\multirow{3}{*}{Qwen3-4B}
  & bf16 & $0.874$ & $0.849$ & $0.070$ & $0.11$ \\
  & int8 & $0.863$ & $0.813$ & $0.070$ & $0.08$ \\
  & int4 & $0.846$ & $0.756$ & $0.069$ & $0.09$ \\
\midrule
\multirow{3}{*}{Gemma-3-4B}
  & bf16 & $0.777$ & $0.294$ & $0.054$ & $0.09$ \\
  & int8 & $0.760$ & $0.207$ & $0.045$ & $0.11$ \\
  & int4 & $0.741$ & $0.058$ & $0.025$ & $0.15$ \\
\midrule
\multirow{3}{*}{Llama-3.1-8B}
  & bf16 & $0.824$ & $0.695$ & $0.067$ & $0.09$ \\
  & int8 & $0.818$ & $0.702$ & $0.068$ & $0.11$ \\
  & int4 & $0.793$ & $0.618$ & $0.065$ & $0.08$ \\
\bottomrule
\end{tabular}
\end{table}

Table~\ref{tab:e8} and Figure~\ref{fig:e8} show the results.
Accuracy decreases modestly with quantization: Qwen loses 2.8 pp from bf16 to int4,
Llama loses 3.1 pp, and Gemma loses 3.7 pp.
These are typical quantization penalties.

The key finding is that \textbf{risk remains below $\alpha = 0.10$ at all precision
levels for all models}.
This is because CRC calibration is performed at inference time at the same precision
used for deployment: if the deployment model is quantized, the calibration should also
use the quantized model.
The calibration automatically adapts to the score distribution of the quantized model.

Participation is more sensitive to quantization.
For Gemma, participation drops from 0.294 (bf16) to 0.058 (int4)—an 80\% reduction.
This reflects the fact that int4 quantization increases score uncertainty for Gemma,
pushing more examples above the certification threshold.
For Qwen and Llama, the participation drop is smaller (9.3 pp and 7.7 pp from bf16 to
int4 respectively).

At our earlier 50-trial resolution, Llama int8 appeared anomalous, with a violation rate
of 0.24 against 0.08 (bf16) and 0.06 (int4). This anomaly does \emph{not} survive at 500
trials: the Llama violation rates are 0.09 (bf16), 0.11 (int8), and 0.08 (int4), i.e.\
flat and within sampling noise of one another. The apparent non-monotonicity was a
small-sample artifact of the 50-trial bootstrap rather than a genuine effect of 8-bit
quantization, which is a further illustration of why the higher-resolution evaluation
matters. Across all models and precisions, risk stays below $\alpha$ and violation
rates remain in the same finite-calibration-sample regime as the bf16 baseline.

\begin{figure}[t]
  \centering
  \includegraphics[width=\linewidth]{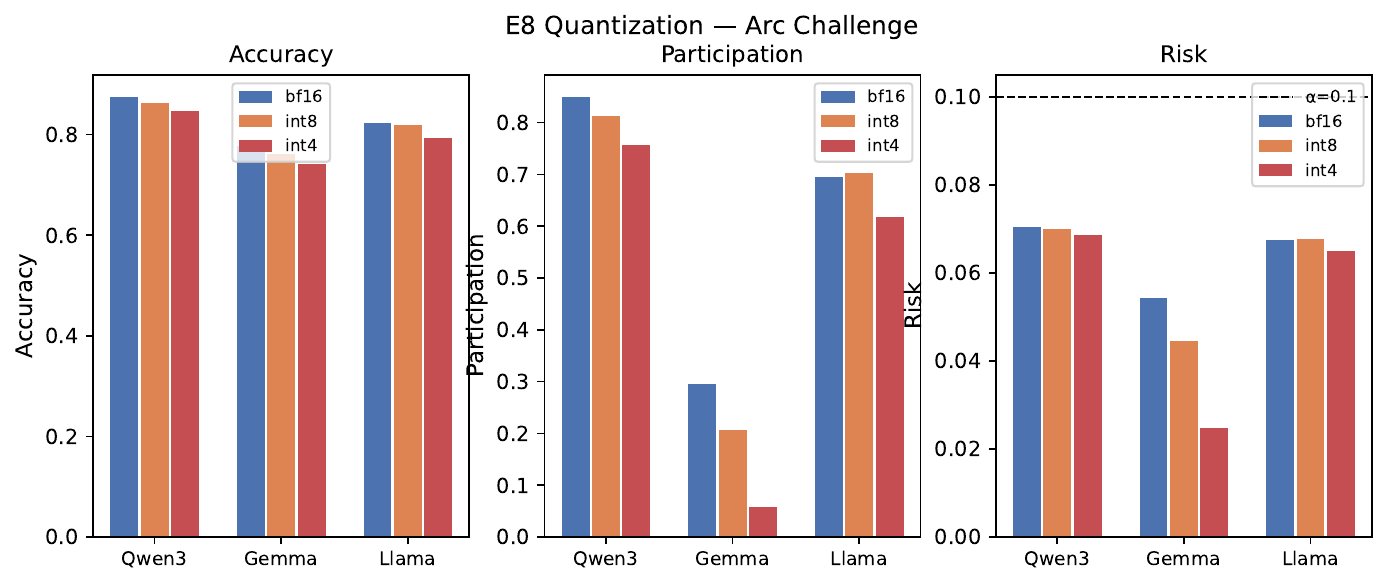}
  \caption{Quantization robustness (ARC Challenge). Accuracy (left), participation
  (center), and risk (right) for all three models at bf16, int8, and int4 precision.
  Risk is controlled at all precision levels; participation is most affected for Gemma.}
  \label{fig:e8}
\end{figure}

\subsection{Difficulty Shift (E6)}

The difficulty shift experiment tests a specific covariate shift: calibration and test
sets have different difficulty compositions.
We consider two directions: \emph{cal-easy/test-hard}, where calibration uses only
easy examples and the test set contains only hard examples; and
\emph{cal-hard/test-easy}, the reverse.

In the \emph{cal-easy/test-hard} direction, both transfer and IID calibration yield
0\% participation.
This is a fundamental limitation: hard examples have an intrinsic error rate above
$\alpha = 0.10$ for these models, meaning that no threshold can simultaneously certify
them and maintain risk control.
The method correctly abstains on all hard examples rather than certifying them under
false pretenses.

In the \emph{cal-hard/test-easy} direction, transfer calibration also yields 0\%
participation—but for a different reason.
The threshold calibrated on hard examples is excessively conservative: hard examples
require a very low NLL score to be certifiable, and that same threshold abstains on
virtually all easy examples as well.
IID calibration on easy examples, by contrast, achieves 100\% participation with a
mean risk of 0.008—well below budget.
This illustrates the direction-dependence of threshold transfer under difficulty shift:
a pessimistic calibration is safe but useless; an optimistic calibration is useful but
unsafe; correct calibration requires examples from the target difficulty distribution.

We caution that part of this clean behavior is again endogenous: when easy/hard are
defined by the deployed NLL, the easy-calibrated threshold mechanically abstains on all
high-NLL (hard) examples, so the easy$\to$hard transfer risk is $0.000$ almost by
construction. Repeating the experiment with \emph{exogenous} difficulty (hard defined by
a reference model's score, as in Section~\ref{sec:e3}) removes this mechanism: the
easy-calibrated threshold then answers a non-trivial fraction of exogenously-hard
examples and incurs a genuine transfer risk of $0.233$ for Qwen---well above $\alpha$.
The qualitative lesson (optimistic transfer is unsafe) is unchanged, but the exogenous
version is the honest illustration of it.

\section{Discussion}
\label{sec:discussion}

\subsection{The Participation-Guarantee Trade-off}

The fundamental tension in selective prediction is between answering more (higher
participation) and ensuring reliability (lower risk, no violations).
\hgcrc{} navigates this trade-off by applying the tightest applicable threshold to each
example, which necessarily increases abstention relative to global CRC.
The trade-off can be characterized precisely: the additional abstention of \hgcrc{}
relative to global CRC is proportional to the degree of group heterogeneity.
In the limit where all groups have identical distributions, the leaf-first policy
selects the global threshold for every example, and \hgcrc{} coincides with global CRC.
In the limit of maximally heterogeneous groups, the leaf-level thresholds dominate and
the participation cost is maximized.

Our results suggest that for ARC Challenge, the heterogeneity across difficulty levels
(ARC has a single domain) is sufficient to justify the additional abstention: \hgcrc{}
reduces violations from 9--11\% to 0\% at a participation cost of 22--37 pp relative to
global CRC.
This cost arises from residual calibration: parent nodes are calibrated on the harder
subpopulation not covered by certified leaf nodes, which requires more conservative
thresholds.
We consider this a principled trade-off—the guarantee now formally covers the deployed
population—for applications where per-group fairness is a requirement.

\subsection{Equal-Participation Comparison: Is It the Guarantee or the Conservatism?}
\label{sec:equalpart}

\hgcrc{} attains lower risk and fewer violations than global CRC partly because it
answers fewer questions. To separate the contribution of the \emph{per-group structure}
from that of \emph{conservatism}, we run a controlled comparison: in each bootstrap
trial we measure \hgcrc{}'s participation on the test fold and then construct a single
global threshold tuned to match that exact participation (the test-score quantile at
\hgcrc{}'s coverage---this uses test scores but no test labels), and compare the two at
equal coverage. We report the \emph{worst-group violation rate}: the maximum over
difficulty groups of the fraction of trials in which that group's realized risk exceeds
$\alpha$. Table~\ref{tab:equalpart} shows the result on ARC Challenge, in both the IID
setting and under the E3 mixture shift.

\begin{table}[t]
\centering
\caption{Equal-participation comparison (ARC Challenge, worst-group violation rate over
difficulty groups). \texttt{global-eq} is a single threshold tuned to \hgcrc{}'s
participation. At matched coverage, \texttt{global-eq} and \hgcrc{} control per-group
risk almost equally; the large worst-group violation of standard global CRC (at its own,
higher participation) is mostly a coverage effect, not a failure of single thresholds
\emph{per se}.}
\label{tab:equalpart}
\setlength{\tabcolsep}{5pt}
\begin{tabular}{llccc}
\toprule
Setting & Model & Matched part. & \texttt{global-eq} & \hgcrc{} \\
\midrule
\multirow{3}{*}{IID}
  & Qwen3-4B     & $0.478$ & $0.010$ & $0.010$ \\
  & Llama-3.1-8B & $0.475$ & $0.020$ & $0.022$ \\
  & Gemma-3-4B   & $0.084$ & $0.024$ & $0.024$ \\
\midrule
\multirow{3}{*}{Mixture shift}
  & Qwen3-4B     & $0.350$ & $0.010$ & $0.010$ \\
  & Llama-3.1-8B & $0.300$ & $0.014$ & $0.014$ \\
  & Gemma-3-4B   & $0.079$ & $0.086$ & $0.054$ \\
\bottomrule
\end{tabular}
\end{table}

The finding is deflating but important for honest framing. \textbf{At equal
participation, a single global threshold controls per-group risk about as well as
\hgcrc{}}---worst-group violation rates are within sampling noise in five of the six
cells, the exception being Gemma under shift, where \hgcrc{} is somewhat safer
($0.054$ vs.\ $0.086$). For comparison, standard global CRC at its own (higher)
participation has worst-group violation rates of $0.70$--$0.96$ in these same settings;
that catastrophic number is therefore overwhelmingly a consequence of answering too
much, not of single thresholds being intrinsically unable to protect groups.

This does not make \hgcrc{} redundant, but it sharpens what its empirical contribution
is and is not. \hgcrc{} does \emph{not} buy a large per-group-safety margin over a
coverage-matched global threshold. What it buys is (i) an \emph{automatic} mechanism for
finding a safe per-group operating point---the coverage-matched \texttt{global-eq} above
requires an oracle (\hgcrc{}'s own test participation) that a standalone global method
does not have---and (ii) the formal simultaneous guarantee and graceful degradation for
sparse groups (Section~\ref{sec:e3}). The practical case for \hgcrc{} rests on these,
not on out-violating a global threshold at equal coverage.

\subsection{Limitations}

\textbf{Scope of the main result.}
Our headline finding---0\% violation and WGER${}=0$---holds specifically on ARC
Challenge and only for models whose base accuracy is well above $1-\alpha$ (Qwen3-4B and
Llama-3.1-8B). It does \emph{not} generalize to our second benchmark. On MMLU-Pro,
Qwen3-4B and Gemma-3-4B have base error rates above 56\%, so no non-trivial threshold is
certifiable and participation collapses to 0\% across all methods; Llama-3.1-8B, the
only model that participates meaningfully there, retains a non-zero WGER of $0.014$
because the per-node calibration sets are too small to certify all hierarchy nodes
simultaneously (Appendix~\ref{app:mmlu}). The practical envelope of \hgcrc{} is therefore
narrow: it certifies a group-conditional guarantee only when the model is already
accurate enough on the deployment distribution that a useful fraction of examples clears
the per-group thresholds; otherwise it correctly, but uninformatively, abstains.

\textbf{WGER rewards abstention.}
Our worst-group excess risk is computed only over answered examples
($\max_g (R_g - \alpha)_+$, with the maximum taken over groups that answer at least one
example). A group that abstains on everything does not contribute to WGER, so a system
that abstains heavily can attain WGER${}\approx 0$ trivially. This is by design---an
abstention carries no error---but it means WGER must be read jointly with the
participation rate: Gemma's near-zero WGER coexists with very low participation and is
closer to this degenerate regime than to a genuine guarantee. We therefore report
participation alongside every WGER.

\textbf{Empirical, not unconditional, in-sample guarantee.}
The deployed procedure is the in-sample variant covered by Lemma~\ref{lem:stability},
whose guarantee carries an additive slack $\varepsilon_n$; the distribution-free split
bound (Proposition~\ref{prop:main}, $\varepsilon_n = 0$) is implemented but not deployed.
While the \emph{mean} measured slack is below 1\%, on the small ARC pool the
\emph{maximum} jackknife slack reaches $\varepsilon_n \approx 0.34$, so the worst-case
in-sample guarantee on that benchmark is materially weaker than the headline numbers
suggest. The procedure we evaluate thus carries an empirical guarantee under
Assumption~\ref{ass:margin}, not an unconditional one.

\textbf{Minimum group size constraint.}
The $N_\text{min}$ requirement prunes nodes with insufficient calibration data.
In datasets with many fine-grained groups (e.g., 57 MMLU-Pro subjects $\times$ 3
difficulty levels = 171 potential leaf nodes), most leaf nodes will be pruned for
typical calibration set sizes.
The hierarchy effectively collapses to a shallower structure, potentially losing
the granularity needed for fine-grained protection.
This is an inherent limitation of nonparametric methods: statistical power is required
to certify each node, and small groups simply do not have enough calibration examples.

\textbf{Gemma-3-4B on ARC Challenge.}
The consistently low participation of Gemma across all experimental conditions reflects
a fundamental incompatibility between the model's uncertainty distribution and the
dataset at the chosen $\alpha$.
Gemma's NLL scores are high and diffuse on ARC questions, suggesting that the model
is genuinely uncertain about multiple-choice science questions at this scale.
Raising $\alpha$ would allow more participation but at the cost of weaker guarantees.
Alternatively, this behavior may improve with larger model scales or better uncertainty
score calibration.

\textbf{Cross-domain transfer without target examples.}
E4 shows that source-only calibration fails for Qwen and Gemma on MMLU $\to$ ARC transfer.
This limits the applicability of \hgcrc{} in zero-shot domain adaptation scenarios where
labeled target examples are not available.
Practical deployments must either collect target calibration data (which E4 shows
requires at least 50 examples) or accept the reduced participation of source-only
calibration.

\textbf{Quantization.}
An apparent int8 anomaly for Llama (a 0.24 violation rate at 50 trials) turned out to be
a small-sample artifact: at 500 trials the violation rates across bf16/int8/int4 are
flat (0.09/0.11/0.08) and within sampling noise. We nonetheless recommend recalibrating
from scratch whenever precision is changed, rather than reusing a threshold from a
different precision level, since the score distribution does shift with precision even
when the resulting violation rate does not.

\textbf{Conservatism of Bonferroni.}
Bonferroni is a worst-case correction that ignores positive correlation between nodes
(higher nodes include all examples of lower nodes).
More powerful procedures such as Holm-Bonferroni, Benjamini-Hochberg, or hierarchical
testing~\citep{yekutieli} could recover some participation without sacrificing the
simultaneous guarantee.
We leave this as future work.

\textbf{Participation-unequal comparisons.}
\hgcrc{} achieves lower violation rates and lower mean risk partly because it answers
fewer questions: a sufficiently conservative method can always achieve low risk by
abstaining on uncertain inputs. We now address this confound directly in
Section~\ref{sec:equalpart}, where we equalize participation by tuning a single global
threshold to \hgcrc{}'s coverage and compare worst-group violation at equal coverage.
The honest result is that, at matched participation, the global threshold controls
per-group risk about as well as \hgcrc{} in our settings; the apparent advantage of
\hgcrc{} over standard global CRC is therefore mostly attributable to its lower
participation, not to the per-group structure per se. We retain \hgcrc{} for its
\emph{automatic} selection of a safe per-group operating point and its formal
simultaneous guarantee, rather than for an empirical per-group-safety margin at equal
coverage.

\textbf{Endogenous difficulty groups.}
Our difficulty bins are defined by percentiles of the NLL uncertainty score---the same
score used for thresholding.
This creates an alignment between group membership and threshold behavior by construction:
the ``hard'' bin contains high-NLL examples, so the CRC threshold calibrated on this
bin will naturally be a larger NLL value.
This caveat is load-bearing rather than cosmetic: because ARC collapses to a single
domain (above), the \emph{only} non-trivial grouping for our most dramatic experiment
(the mixture shift of Section~\ref{sec:e3}, where global CRC reaches a 47\% violation
rate) is the endogenous difficulty partition, and ``shifting toward hard'' is by
definition ``shifting toward high NLL''---precisely the direction that breaks an
NLL threshold. We address this directly in Section~\ref{sec:e3} (Table~\ref{tab:e3exo})
by redefining difficulty with an \emph{exogenous} signal---a reference model's score,
independent of the deployed threshold. The mixture-shift failure persists under exogenous
difficulty (global CRC violates in 31--39\% of trials), so the effect is structural
rather than an artifact of the endogenous bins; endogeneity inflates the magnitude only
modestly (47\% $\to$ 31\% for Qwen). The residual limitation is that our exogenous signal
is still a model-derived score rather than a human difficulty label, which the loaded
\texttt{allenai/ai2\_arc} distribution does not provide; a comparison against human
grade-level labels from the original ARC release remains future work.

\section{Conclusion}
\label{sec:conclusion}

We introduced \hgcrc{}, a post-hoc calibration framework for selective prediction in
language models that provides simultaneous risk guarantees across all nodes of a
user-defined group hierarchy.
The method combines three components: Bonferroni correction to control the family-wise
error rate across nodes, a leaf-first selection policy to apply the most specific
applicable threshold, and a minimum group size filter to ensure that uncertifiable
nodes do not enter the hierarchy.

Our main empirical finding is that \hgcrc{} achieves a \textbf{zero empirical violation
rate} and \textbf{WGER = 0} for Qwen3-4B and Llama-3.1-8B-Instruct on ARC Challenge, with
a participation cost of 22 to 37 percentage points relative to global CRC.
These zeros are upper bounds at the resolution of our 500-trial evaluation
(${\sim}0.6\%$ by the rule of three), not certified zeros, and they are specific to ARC
Challenge and to models whose base accuracy is well above $1-\alpha$; on MMLU-Pro the
guarantee is not met (Llama: WGER${}=0.014$) or the model abstains entirely
(Appendix~\ref{app:mmlu}).
Ablation studies show that the hierarchical depth is the component that clears the risk
budget on ARC: removing the difficulty level reinstates an 11\% violation rate. The
Bonferroni correction is required for the theoretical simultaneous guarantee, but its
empirical effect is $|\calH|$-dependent---negligible on ARC's five-node hierarchy (where
corrected and uncorrected calibration both control risk) and visible only when many nodes
are tested, as on MMLU-Pro.

Beyond the main result, our empirical study demonstrates that: global CRC fails
catastrophically under mild group composition shift (47\% violation rate); label noise
causes graceful conservatism rather than violation; prompt changes require recalibration
but can be corrected cheaply; and model quantization does not break risk control as long
as calibration is performed at the deployed precision.

These results position \hgcrc{} as a practically deployable method for equitable
selective prediction: it requires no retraining, adapts to any pre-trained model and
uncertainty score, and provides formal statistical guarantees with transparent
assumptions.

\section*{Ethics Statement}

Selective prediction with abstain can introduce \emph{disparate abstention rates} across
groups: if minority groups receive more abstentions, they may receive less service.
\hgcrc{} provides WGER = 0, which ensures that no group's error rate exceeds the budget,
but it does not directly control participation equity.
A group with a structurally high error rate will always receive lower participation than
a group with a low error rate, for the same $\alpha$.
Practitioners should monitor participation rates across groups in addition to risk rates.

The datasets used (ARC Challenge, MMLU-Pro) reflect cultural and linguistic biases
toward Western educational systems and English-language knowledge.
Results may not generalize to other languages, cultures, or educational backgrounds.

\section*{Reproducibility Statement}

All models used are publicly available on HuggingFace.
Code, calibration scripts, and evaluation scripts will be released upon acceptance.
All experiments use fixed random seeds.
Key hyperparameters: $\alpha = 0.10$, $\delta = 0.05$, $N_\text{min} = 30$,
$n_\text{bootstrap} = 500$, threshold grid of 100 quantiles.
Model inference was performed on a single NVIDIA A100 GPU; calibration and bootstrap
evaluation were performed on CPU.
Expected runtimes: inference caching $\approx$2--4 hours per model-dataset pair;
calibration+evaluation $\approx$5 minutes per experimental configuration.

\bibliography{references}
\bibliographystyle{tmlr}

\appendix

\section{Extended Results: MMLU-Pro}
\label{app:mmlu}

MMLU-Pro results are qualitatively different from ARC Challenge due to structural
model limitations.
Qwen3-4B and Gemma-3-4B have base error rates above 56\% on MMLU-Pro, meaning that
even the most conservative threshold cannot certify a non-trivial fraction of examples
under $\alpha = 0.10$.
As a result, participation is essentially 0\% for these models across all methods (at
500 trials, global participation is exactly $0.000$ for both, with at most $\sim$2\%
residual participation for the groupwise variant of Qwen).
Llama-3.1-8B-Instruct achieves moderate accuracy and participates meaningfully
($\sim$2.7\% under the hierarchy, 9.1\% under global); however, hierarchical results for
Llama on MMLU-Pro show a non-zero WGER of $0.014$, indicating that simultaneous
certification of all hierarchy nodes is not fully achieved with the available
calibration size (MMLU-Pro has fewer examples per domain than ARC Challenge). This is
smaller than the $0.059$ we measured at 50 trials, but it remains above zero, so the
headline ``WGER${}=0$'' result does not extend to this benchmark.

These results illustrate an important practical point: \hgcrc{} is most effective when
the model's base accuracy on the deployment dataset is well above $1 - \alpha$.
For weaker models or harder datasets, the method correctly abstains rather than
providing false guarantees, but this comes at the cost of utility.

\section{Extended E5: All Models}
\label{app:e5}

Table~\ref{tab:e5_full} extends the prompt shift results to Llama-3.1-8B-Instruct
and Gemma-3-4B.

\begin{table}[H]
\centering
\caption{Prompt shift results (ARC Challenge, all models).}
\label{tab:e5_full}
\begin{tabular}{llcccc}
\toprule
Model & Template & Risk (transfer) & Risk (recalib) & Part (transfer) & Part (recalib) \\
\midrule
\multirow{3}{*}{Llama-3.1-8B}
  & Default    & $0.077$ & $0.077$ & $0.729$ & $0.729$ \\
  & Reasoning  & $0.043$ & $0.075$ & $0.525$ & $0.640$ \\
  & Formal     & $0.099$ & $0.074$ & $0.812$ & $0.760$ \\
\midrule
\multirow{3}{*}{Gemma-3-4B}
  & Default    & $0.073$ & $0.073$ & $0.521$ & $0.521$ \\
  & Reasoning  & $0.048$ & $0.071$ & $0.354$ & $0.460$ \\
  & Formal     & $0.062$ & $0.073$ & $0.504$ & $0.599$ \\
\bottomrule
\end{tabular}
\end{table}

Interestingly, for Llama and Gemma, the reasoning template does \emph{not} cause a
transfer risk violation—it actually reduces risk below $\alpha$ (0.043 and 0.048
respectively).
This is the reverse of the Qwen behavior.
The reasoning template shifts probability mass toward the correct answer for these
models (chain-of-thought improves accuracy), so the transferred threshold is too
conservative: it abstains more than needed.
Recalibration adjusts the threshold upward, recovering participation.
For Qwen, by contrast, reasoning outputs are more diffuse (CoT introduces uncertainty),
causing the transferred threshold to be too aggressive.

This model-dependence underscores the importance of per-model recalibration after
prompt changes: the direction of the transfer error depends on whether the new prompt
improves or degrades model confidence.

\section{Domain Shift: NQ $\to$ TriviaQA}
\label{app:nq}

The NQ $\to$ TriviaQA transfer consistently shows zero participation under all
strategies and all $n_\text{target}$ values.
We attribute this to two factors.
First, open-domain QA has a fundamentally higher variance in difficulty than
multiple-choice tasks: question phrasing, required factual knowledge, and answer
specificity vary widely.
Second, NQ and TriviaQA differ substantially in answer style (NQ requires short answers
to web queries; TriviaQA requires recall of trivia facts), leading to different NLL
distributions.
The NQ threshold cannot certify TriviaQA examples because the score distributions
do not overlap usefully.

\section{Ablation: $N_\text{min}$ Sensitivity (A3)}
\label{app:a3}

\begin{table}[H]
\centering
\caption{$N_\text{min}$ sensitivity (ARC Challenge, Qwen3-4B, 500 bootstrap trials).
$N_\text{min}$ is the minimum node size below which a leaf is pruned and its examples
fall back to the parent. ARC has a single domain, so the full hierarchy is
global ${}+{}$ domain ${}+{}$ three difficulty leaves ($|\calH| = 5$); once
$N_\text{min}$ exceeds the per-leaf calibration size the difficulty leaves are pruned and
$|\calH|$ collapses to $2$.}
\label{tab:a3}
\begin{tabular}{lccccc}
\toprule
$N_\text{min}$ & $|\calH|$ & Risk & Participation & WGER & Viol.\ rate \\
\midrule
25  & $5$ & $0.018$ & $0.478$ & $0.000$ & $0.000$ \\
50  & $5$ & $0.018$ & $0.478$ & $0.000$ & $0.000$ \\
100 & $5$ & $0.018$ & $0.478$ & $0.000$ & $0.000$ \\
200 & $2$ & $0.065$ & $0.830$ & $0.000$ & $0.054$ \\
\bottomrule
\end{tabular}
\end{table}

Table~\ref{tab:a3} shows a single, interpretable transition. As long as $N_\text{min}$ is
small enough to retain the three difficulty leaves ($N_\text{min} \le 100$, since each
leaf holds ${\approx}110$--$120$ calibration examples in a 50\% split), the hierarchy has
$|\calH| = 5$ and
achieves zero violations at participation $0.478$. Once $N_\text{min} = 200$ prunes the
difficulty leaves, the hierarchy collapses to $|\calH| = 2$ (global ${}+{}$ the redundant
single domain), participation rises to $0.830$, and violations reappear ($5.4\%$)---the
system has degenerated to essentially global CRC. The lesson is that $N_\text{min}$ must
be small enough to keep the difficulty leaves alive; we recommend
$N_\text{min} \in [25, 100]$ for ARC. The range of $|\calH|$ here is modest ($2$--$5$)
because ARC has one domain and three difficulty bins; datasets with real domain structure
(MMLU-Pro) admit larger hierarchies.

\emph{Correction.} An earlier version of this table reported $|\calH|$ up to ${\sim}18$
with a graded participation sweep. Those numbers did not correspond to the ARC pipeline
(which admits at most five nodes) and could not be reproduced by the ARC or the MMLU-Pro
ablation harness; they have been replaced by the run above.

\end{document}